\journal{Artificial Intelligence}
\newtheorem{definition}{Definition} 
\newtheorem{theorem}{Theorem}
\newcommand{\logikey}{\textsc{LogiKEy}}
\newcommand{\leoII}{\textsc{LEO-II}}
\newcommand{\leoIII}{\textsc{Leo-III}}
\newcommand{\isabellehol}{\textsc{Isabelle/HOL}}
\newcommand{\typearrow}{\shortrightarrow} \newcommand{\itype}{i}
\newcommand{\utype}{u} \newcommand{\proptype}{\tau}
\begin{document}

\begin{frontmatter}

  \title{Designing Normative Theories for Ethical and Legal Reasoning:
    \logikey\ Framework, Methodology, and Tool Support}
  \fntext[myfootnote]{Benzm\"uller was funded by the
    VolkswagenStiftung under grant CRAP (Consistent Rational
    Argumentation in Politics). Parent and van der Torre were
    supported by the European Union's Horizon 2020 research and
    innovation programme under the Marie Sk\l{}odowska-Curie grant
    agreement MIREL (MIning and REasoning with Legal texts) No
    690974.}

  \author[addressBer,addressLux]{Christoph Benzm\"uller}
  \ead{c.benzmueller@fu-berlin.de} \author[addressLux]{Xavier Parent}
  \ead{x.parent.xavier@gmail.com} \author[addressLux,addressChina]{Leendert
    van der Torre} \ead{leon.vandertorre@uni.lu}

  \address[addressLux]{Department of Computer Science, University
    of Luxembourg, Esch-sur-Alzette, Luxembourg}
  \address[addressBer]{Department of Mathematics and Computer Science,
    Freie Universit\"at Berlin, Berlin, Germany}
  \address[addressChina]{Institute of Logic and Cognition, Zhejiang
    University, Hangzhou, China}

\begin{abstract}
  A framework and methodology---termed \logikey---for the design and
  engineering of ethical reasoners, normative theories and deontic
  logics is presented.  The overall motivation is the development of
  suitable means for the control and governance of intelligent
  autonomous systems.  \logikey's unifying formal framework is based
  on semantical embeddings of deontic logics, logic combinations and
  ethico-legal domain theories in expressive classic higher-order
  logic (HOL). This meta-logical approach enables the provision of
  powerful tool support in \logikey: off-the-shelf theorem provers and
  model finders for HOL are assisting the \logikey\
  designer of ethical intelligent agents to flexibly experiment with
  underlying logics and their combinations, with ethico-legal domain
  theories, and with concrete examples---all at the same
  time. Continuous improvements of these off-the-shelf provers,
  without further ado, leverage the reasoning performance in \logikey.
  Case studies, in which the \logikey\ framework and methodology has
  been applied and tested, give evidence that HOL's undecidability
  often does not hinder efficient experimentation.
\end{abstract}

\begin{keyword} Trustworthy and responsible AI; Knowledge
  representation and reasoning; Automated theorem proving; Model
  finding; Normative reasoning; Normative systems; Ethical issues; Semantical embedding; Higher-order logic
\end{keyword}

\end{frontmatter}


\section{Introduction}

The \emph{explicit representation-of and reasoning-with ethical and legal knowledge} capacitates ethical intelligent systems with
increasing levels of autonomy in their decision making
\cite{Malle2016,MalleEtAl2017}. It also supports sufficient degrees of
reliability and accountability, and it enables human intuitive
interaction means regarding explainability and transparency of
decisions.  The ethical and/or legal theories---we often call them
{\em ethico-legal theories} in the remainder---can thereby be
represented as normative systems formalized on top of suitably
selected underlying deontic logics
\cite{deon:handbook,nmas,textbook18} and logic combinations
\cite{LogicCombining,gabbay2003many}.  In this article we introduce
our \emph{\textbf{Logi}c and \textbf{K}nowledge \textbf{E}ngineering
  Framework and Methodolog\textbf{y}}, termed \logikey, to apply
off-the-shelf theorem proving and model finding technology to the
design and engineering of specifically tailored normative systems and
deontic logics for deployment in ethical intelligent reasoners.

An overall objective of \logikey\ is to enable and support {\em the
  practical development of computational tools for normative reasoning
  based on formal methods.}  To this end we introduce the concept of
experimentation in the formal study of ethical and legal reasoning: by
representing examples, ethico-legal domain theories, deontic logics
and logic combinations in a computational system we enable predictions
and their assessment, and apply formal methods. Given this motivation
to experiment-with, explore and assess different theories and logics
in the design of ethical intelligent reasoners we address the
following research questions:
\begin{enumerate}
\item[A:]  Which \emph{formal framework to choose},
\item[B:]  which \emph{methodology to apply}, and
\item[C:]  which \emph{tool support to consult}
\end{enumerate}
for experimentation with ethico-legal theories?

Several examples of computational tools to experiment with ethical
reasoners, normative systems and deontic logics have been introduced
in the recent literature~\cite{J45,C71,J46,MederMasters,C76,C77} and
are used in this article to illustrate the feasibility of the proposed
solutions.  \logikey\ extracts the general insights from these prior
case studies and bundles them into a coherent approach.

\subsection{Formal framework: expressive classical higher-order logic}
We first explain why we use classical higher-order logic (HOL), i.e.,
Church's type theory \cite{J43}, as our {formal framework}.  To
understand the challenge of the choice of a formal framework (cf.~our
research question~A), consider the requirements of a typical
ethical agent architecture as visualized in Fig.~\ref{fig:topdown}.
\begin{figure}[t] \centering
  \includegraphics[width=.62\textwidth]{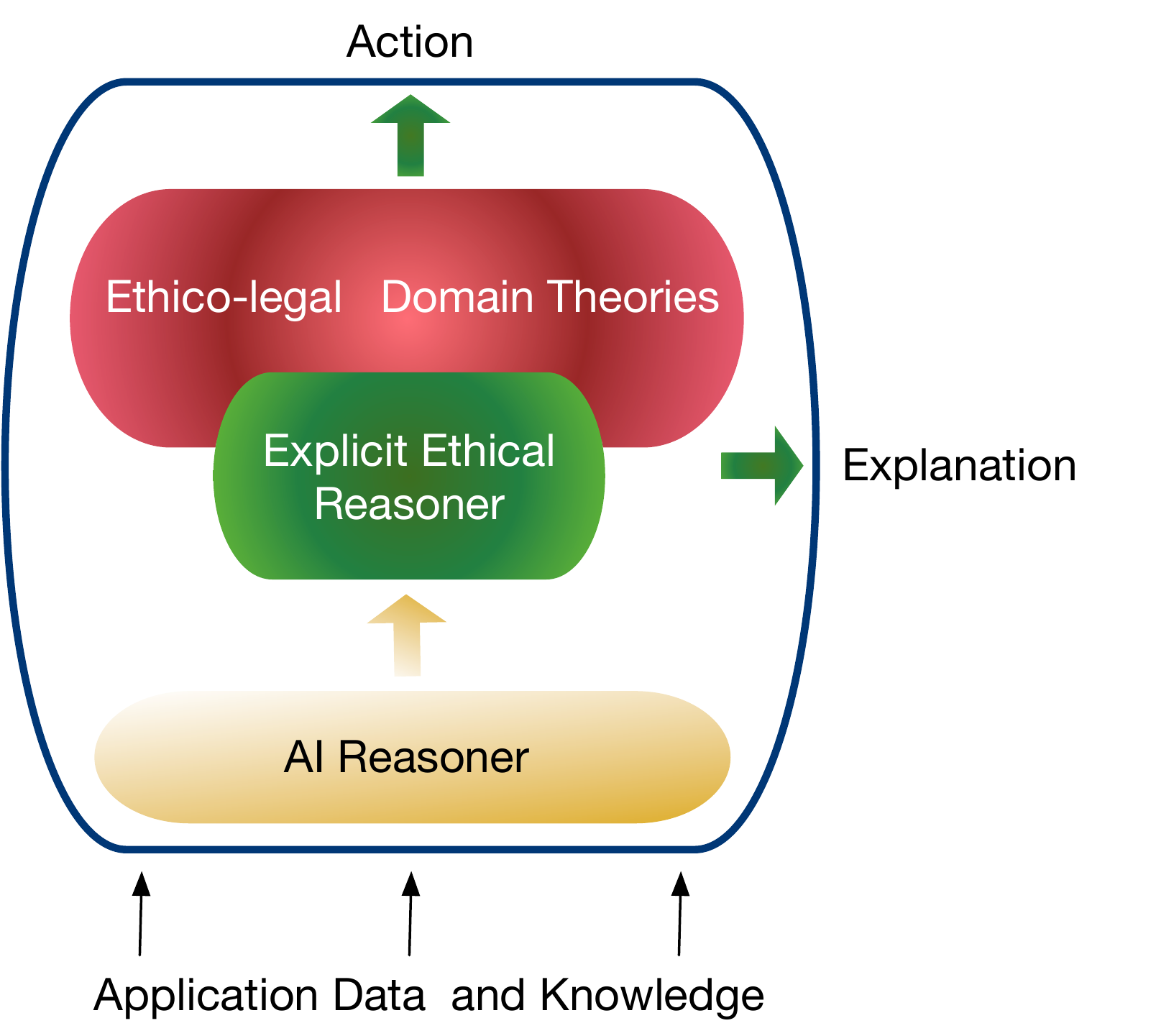}
  \caption{Explicit ethical reasoner for intelligent autonomous
    systems\label{fig:topdown}.}
\end{figure}
The displayed architecture for an intelligent autonomous system
\cite{IAS} with explicit ethical competency distinguishes an explicit
ethical reasoner and ethico-legal domain theories from an AI
reasoner/planner and from other components, including also application
data and knowledge available to both reasoners.  The ethical reasoner
takes as input suggested actions from the AI reasoner/planner, hints
to relevant application data and knowledge, and the ethico-legal
domain theories, and it produces as output assessments and judgements
concerning which actions are acceptable or not, and it also provides
corresponding explanations. That is, the actions suggested by the AI
reasoners in Fig.~\ref{fig:topdown} are not executed immediately, but
additionally assessed by the ethical reasoner for compliance with
respect to the given ethico-legal domain theories. This assessment is
intended to provide an additional, explicit layer of explanation and  control on top of
the AI reasoner, which ideally already comes with solid own ethical
competency.  For the aims in this article, the details of the ethical
agent architecture are not important. For example, it does not matter
whether computations on the level of the AI reasoner are based on
sub-symbolic or symbolic techniques, or combinations of them, since
the suggested actions are not executed immediately---at least not
those considered most critical. Instead they are internally assessed,
before execution, against some explicitly modelled ethico-legal 
theories.  These theories govern and control the behaviour of the
entire system, and they also support---at this upper-level---system
verification and intuitive user explanations.  What counts, in
particular, in highly critical contexts, is not only how an AI
reasoner computed the critical action it wants to perform, but in
particular whether this action passes the additional assessment by the
upper-level explicit ethical reasoner before its execution.

The formal framework of the ethical reasoner has to be populated with
ethico-legal domain theories, be able to combine logics, reason about
the encoded domain theories, and experiment with concrete applications
and examples, as visualized in~Fig.~\ref{fig:metho}.
\begin{figure}[t]\centering
  \includegraphics[scale=0.52]{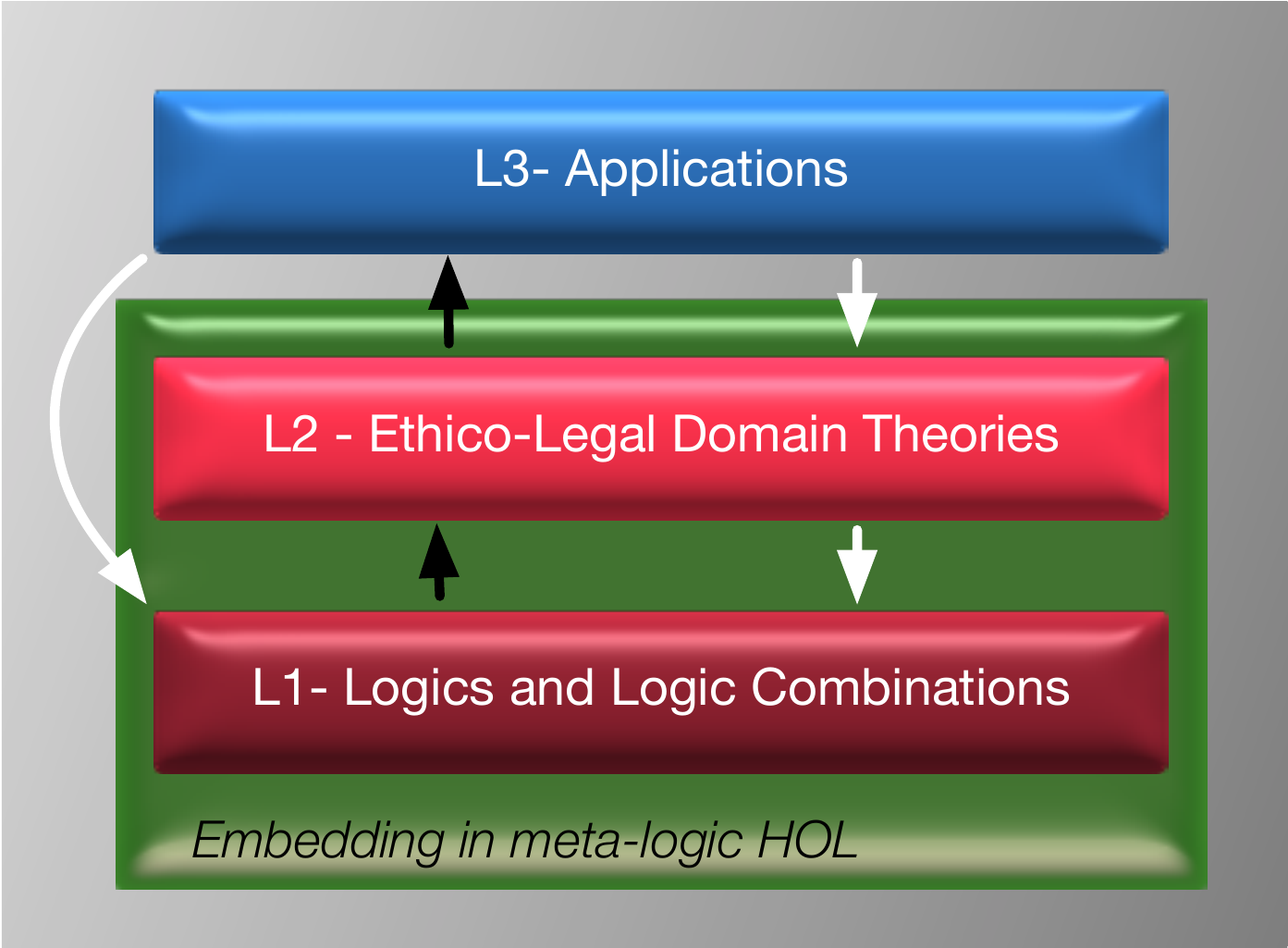}
\caption{Logic and knowledge engineering in \logikey.
  \label{fig:metho}}
\end{figure}

This leads to an apparent paradox: on the one hand, the area of
knowledge representation and reasoning in artificial intelligence is
built on highly specialized and efficient formalisms;\footnote{In
  formalized mathematics, in contrast, practical
  developments are interestingly progressing already in the opposite
  direction; proof assistants such as \textsc{Lean} \cite{Lean},
  \textsc{Coq} \cite{Coq}, or \textsc{Agda} \cite{Agda}, for example,
  are being developed and deployed, which, due to their rich type
  systems, are practically even more expressive than the HOL provers
  we utilize in \logikey: \isabellehol~\cite{Isabelle} and
  \leoIII~\cite{Leo-III}. The latter provers, however, provide a very
  good degree of proof automation, and this, combined with a
  sufficient degree of expressivity for our purposes, is the reason we
  prefer them so far.}  on the other hand to combine all these aspects
in a single formalism and to experiment with it, we need a highly
expressive language.  For example, as we explain in more detail in
Sect.~\ref{sec:deontic-logic}, the handling of normative concepts such
as obligation, permission and prohibition
is actually far more complex than one would initially think. This is
illustrated, among others, by the notorious paradoxes of normative
reasoning~\cite{Carmo2002}, and appropriate solutions to this
challenge require sophisticated deontic logics.  In \logikey\ we
therefore model and implement ethical reasoners, normative systems and
deontic logics in the expressive meta-logic HOL. This often triggers
scepticism, in particular, in the AI knowledge representation and
reasoning community, since HOL has known theoretical drawbacks, such
as undecidability.  However, HOL, just like first-order logic, has
decidable fragments, and our shallow semantical embedding~(SSE)
approach, cf.~Sect.~\ref{sec:SSE}, is capable of translating decidable
proof problems, say in propositional modal logic, into corresponding
decidable proof problems in (a fragment of) HOL. For example, the
modal formula $\Box
(\varphi \wedge
\psi)$ is---assuming a request about its global validity---rewritten
in the SSE approach into the guarded fragment formula $\forall
w. \forall v. \neg (R w v) \vee (\varphi v \wedge \psi v)$, where
$R$
is denoting the accessibility relation associated with $\Box$.
The (in-)validity of formulas in the guarded fragment is decidable, and our example formula can be effectively handled by
\isabellehol and \leoIII. The reason
why these system can handle such decidable problems effectively is
simple: they internally collaborate with state-of-the-art SAT \& SMT
(satisfiability modulo theories) solvers, and with first-order
automated theorem proving systems; this ensures that fragments
of HOL are indeed attacked with tools that are specialized for these
fragments. Moreover, the framework we provide is universal, and this
makes it an ideal approach for experimentation.

\subsection{Methodology: \logikey}

The \logikey\ methodology, a high-level development protocol that will
be explained in detail in~Sect.~\ref{sec:logikey}, constitutes our
answer to research question B. This methodology supports and
guides the simultaneous development of the three layers depicted in
Fig.~\ref{fig:metho}: combining logics, ethico-legal theories, and
concrete examples and applications.  The black arrows between these
three levels symbolize dependencies. The normative governance
applications developed  at layer L3 depend on ethico-legal domain
theories imported from layer L2, which in turn are formalized within a
specific logic or logic combination provided at layer L1.\footnote{In
  this article we utilize a uniform coloring scheme in all our
  conceptual figures: layers L1 and L2 are displayed using dark red
  and light red coloring, respectively. Applications and experiments
  are displayed in blue, and the meta-logical HOL framework that is
  supporting the layers L1 and L2 is shown in green.}  The
engineering process at the different layers has backtracking points
and several work cycles may be required; thereby the higher layers may
also pose modification requests to the lower layers.  The white arrows
symbolize such possible requests, and they may, unlike in most other
approaches, also include far-reaching modifications of the logical
foundations engineered at layer L1; such modifications at the logic
layer are flexibly facilitated in our meta-logical approach.

As a concrete application, consider a smart home visualized in
Fig.~\ref{ama} \cite{DBLP:journals/corr/abs-1812-04741}.
\begin{figure}[htp]\centering
  \includegraphics[scale=0.5]{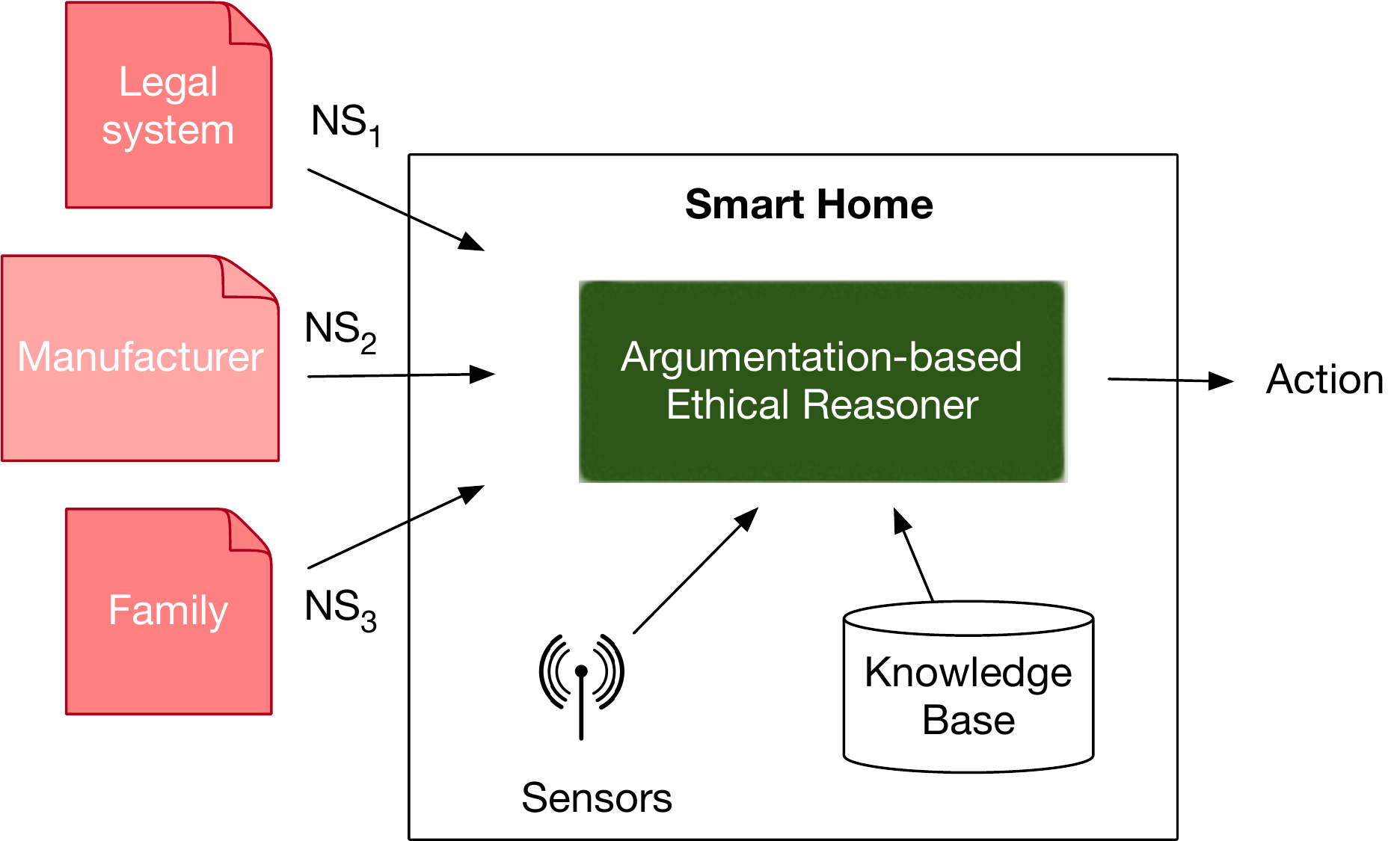}
  \caption{Example of ethical reasoner in a smart home
    \cite{DBLP:journals/corr/abs-1812-04741}.\label{ama}}
\end{figure}
This one uses an argumentation engine \cite{Dung} in order to make
ethical decisions based on normative systems of stakeholders.  The
argumentation-based mechanism is used to find an agreement in case of
moral dilemmas, and to offer an explanation as to how a specific
morally sensitive decision has been made.  Liao {\em et al.}~assume
the existence of pre-defined systems of norms, coming from the outside
and constraining the agent's behavior. Like humans, intelligent
systems evolve in a highly regulated environment.  For instance, a
smart home processes personal data 24/7, and, as will be discussed in
Sect.~\ref{sec:GDPR}, there are legal rules one must comply with when
processing personal data; in the European Union such rules include the
General Data Protection Regulation (GDPR, Regulation EU 2016/679).
Given a description of the current situation, the smart home should
thus be able, for example, to check compliance with these rules, and
act accordingly in case of non-compliance.

\subsection{Tool support: \isabellehol\ and \leoIII}

Our use of higher-order tool support, cf.~our research question C,
is visualized in Fig.~\ref{fig:infrastructure}. The \logikey\
methodology supports experimentation with different normative
theories, in different application scenarios, and it is not tied to a
specific deontic logic and also not restricted to decidable logics
only.  Since ethico-legal theories as well as suitable normative
reasoning formalisms are both subject to exploration in \logikey, we
introduce a flexible workbench to support empirical studies in which
both can be varied, complemented, assessed and compared.
\begin{figure}[t]\centering
  \includegraphics[width=.5\textwidth]{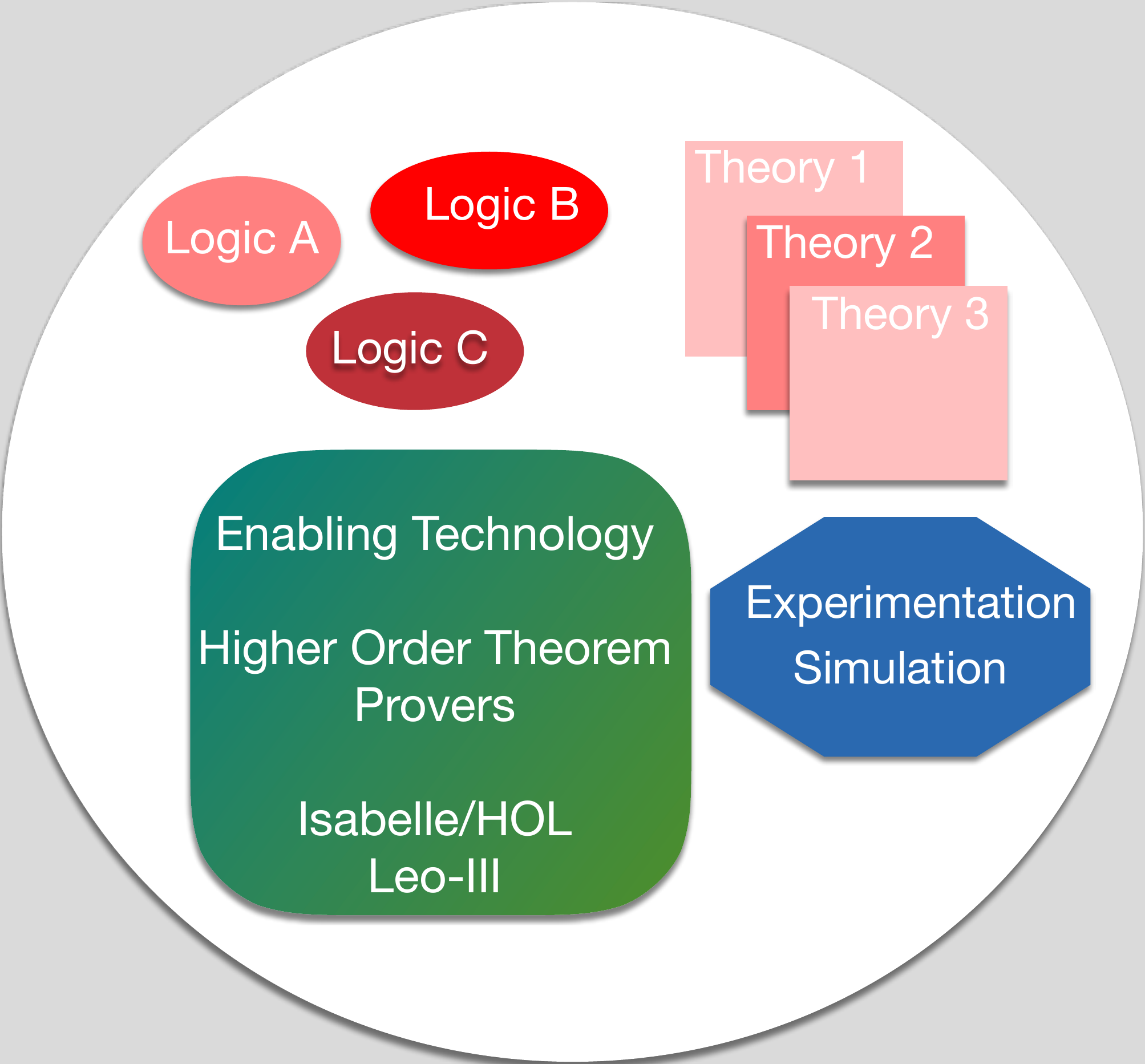}
	\caption{Flexible tool support for normative reasoning is required in \logikey. \label{fig:infrastructure}}
      \end{figure}

      The enabling technology are higher-order theorem proving systems
      via the SSE technique~\cite{J41}. We benefit from good
      improvements in the past decade in interactive theorem proving
      (ITP) and automated theorem proving (ATP) for HOL, and also from
      the overall coalescence of heterogeneous theorem proving
      systems, as witnessed, in particular, by \isabellehol,
      \leoII~\cite{J30} and \leoIII, which fruitfully integrate other
      specialist ATP systems.  In this way---as a sort of relevant byproduct
      of our research---we build a bridge between the classical and
      non-classical logics communities, the deduction systems
      community, and the formal ethics community;\footnote{The formal
        ethics community, among others, includes: \emph{Normative
          Multi-Agent Systems} (\url{http://icr.uni.lu/normas}),
        \emph{Deontic Logic and Normative Systems}
        (\url{http://deonticlogic.org}) and \emph{Formal Ethics}
        (\url{http://www.fe2019.ugent.be}).  With deduction systems
        community we refer to the areas of \emph{Automated Reasoning}
        (\url{http://aarinc.org}) and \emph{Interactive Theorem
          Proving} (\url{https://itp2018.inria.fr}); these are in turn
        composed of several subcommunities.}  cf.~also
      Fig.~\ref{fig:framework}, which will be explained in more detail
      in Sect.~\ref{sec:SSEapproach}.

      Theorem proving was a major impetus for the development of
      computer science, and ATP and ITP systems have been applied to a
      variety of domains, including mathematics and software \&
      hardware verification. However, comparably little work has been
      invested so far to apply such systems to deontic logic and
      normative reasoning.

\subsection{Contributions}
The core contributions of this article are on various levels:
\begin{enumerate}
\item Motivated by our position in the trustworthy AI debate---where
  we defend the need for explicit ethico-legal governance of
  intelligent autonomous systems---we provide a survey of various
  technical results pertaining to the application of automated theorem
  proving to ethical reasoners, normative systems and deontic logics.
   
\item This survey is targeted to a general audience in AI.  Instead of
  presenting selected formal definitions and proofs, we discuss
  motivations, commonalities and applications. Moreover, we provide an
  entry-level user-guide to our framework and methodology, called
  \logikey.  Existing approaches~\cite{DBLP:journals/ijswis/KontopoulosBGA11,DBLP:conf/birthday/FurbachS14,DBLP:conf/isaim/BringsjordGMS18,DBLP:series/sapere/PereiraS16}
  are always tied to rather specific logical settings. We introduce a
  flexible workbench to support empirical studies with such theories
  in which the preferred logic formalisms themselves can be varied,
  complemented, assessed and compared.
    
\item Our ambition is to build a library of ethical reasoners,
  normative systems and deontic logics, and with this article we make
  a first step in that direction by providing a collection of \isabellehol\
  source files that can be reused by researchers and students
  interested in adopting \logikey\ in their own work.\footnote{Our data set is available at \url{http://logikey.org}; see also the associated article \cite{J53} in \textit{Data in brief}.}
\end{enumerate}

We briefly clarify some terminology as used in the article. With
\emph{experimentation} we refer to the action or process of trying out
new ideas, methods, or activities; this can be, for instance, the
drafting of a new logic or combination of logics, or a new ethical
theory, where one may want to see what would be the consequences of
its adoption in a given application context. \emph{Methodology} refers
to the principles underlying the organization and the conduct of a
design and knowledge engineering process; hereby \emph{design} means
the depiction of the main features of the system we want to achieve,
and \emph{(knowledge or logic) engineering} refers to all the
technical and scientific aspects involved in building, maintaining and
using a knowledge-based, resp.~logic-based, system.
\emph{Infrastructure} is the framework that supports different
components and functionalities of a system, and by \emph{logic
  implementation} we here mean an engineering process that returns an
executable computer program, a theorem prover, for a given logic; this
can be done in very different ways, for example, by coding software
from scratch or, as we prefer, by adopting the SSE approach, which
utilizes HOL as a meta-level programming language within existing
reasoners such as \isabellehol\ or \leoIII.

The structure of this article is as follows.  Section~\ref{sec:SSE}
further motivates the need for a flexible normative reasoning
infrastructure and presents \logikey's expressive reasoning
framework. Section~\ref{sec:deontic-logic} briefly surveys and
discusses the challenging area of normative reasoning; this section
also explains which particular deontic logics have already been
implemented. A concrete example implementation of a state-of-the-art
deontic logic, \AA qvist system {\bf E}, is presented in further
detail in Sect.~\ref{sec:example-section}.  Tool support is discussed
in Sect.~\ref{sec:tools}, and subsequently two case studies are
presented in Sect.~\ref{sec:case-studies}.  The first case study
illustrates contrary-to-duty compliant reasoning in the context of the
general data protection regulation.  A second, larger case study shows
how our framework scales for the formalization and automation of
challenging ethical theories (Gewirth's \emph{``Principle of Generic
  Consistency"} \cite{GewirthRM}) on the computer. To enable the
latter work an extended contrary-to-duty compliant higher-order
deontic logic has been provided and utilized in our framework.
Section~\ref{sec:logikey} provides a detailed description of
\logikey's 3-layered logic and knowledge engineering methodology; it
may thus provide useful guidance to future \logikey\ users.  Sections
\ref{sec:RelatedWork} and \ref{sec:FurtherWork} discuss related work
and further research, and Sect.~\ref{sec:Conclusion} concludes the
article.

\section{The \logikey\ expressive reasoning framework: meta-logic
  HOL} \label{sec:SSE}

This section presents and discusses \logikey's expressive reasoning
framework, cf.~Fig.~\ref{fig:framework}, which utilizes and adapts
Benzm\"uller's SSE approach \cite{J41}.
 
\begin{figure}[t]
  \hskip1em
  \includegraphics[width=.95\textwidth]{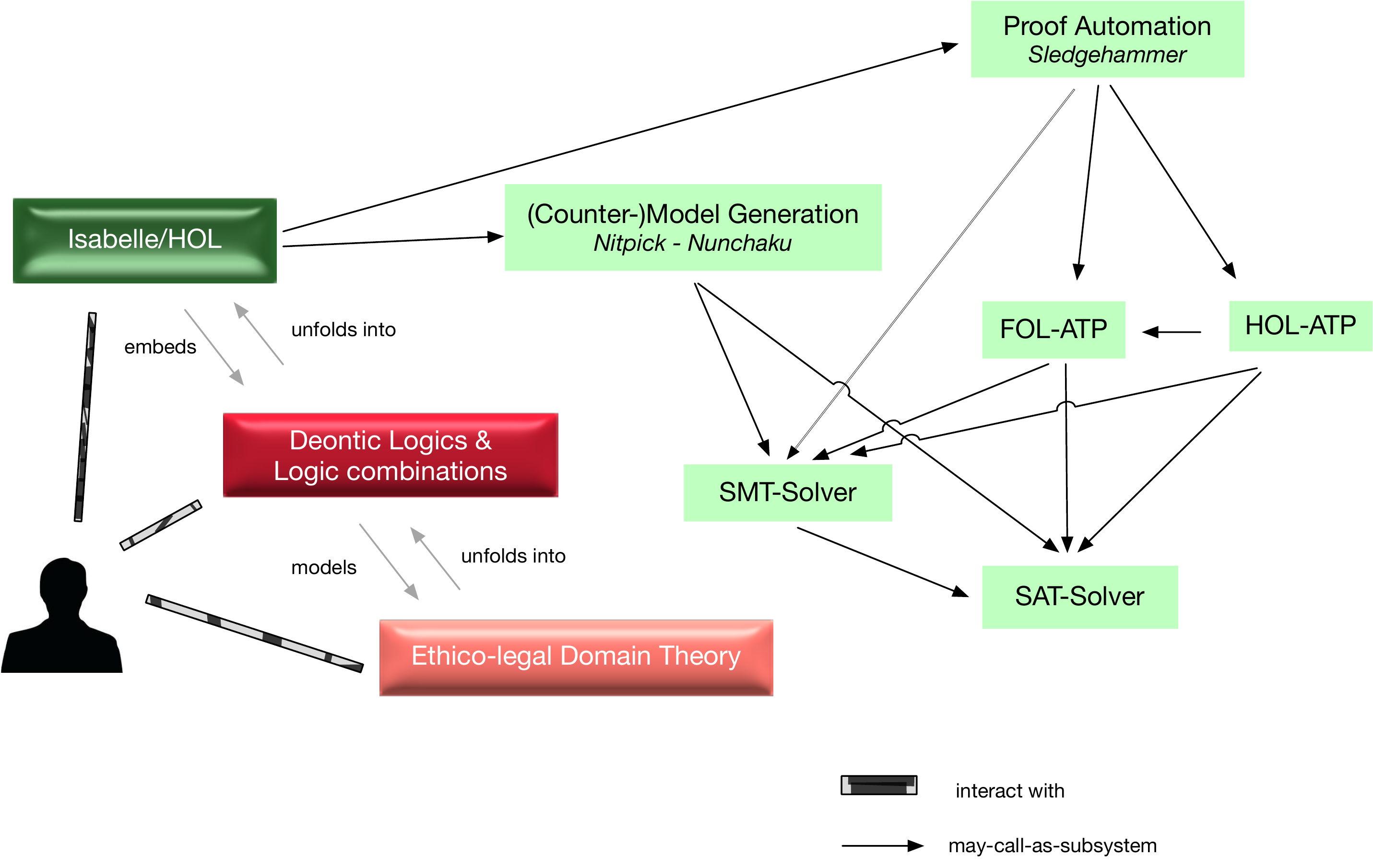}
\caption{The \logikey\ expressive reasoning framework is based on meta-logic HOL. \label{fig:framework}}
\end{figure}

The implementation of specialist theorem provers and model finders
for, e.g., ambitious deontic logics and their extensions and
combinations, is very tedious and requires expertise. \logikey\
therefore focuses on reuse and adaptation of existing technology
rather than new implementations from first principles. A particular
interest is to enable novices to become acquainted with the area of
normative reasoning in short time and in a computer-assisted, hands-on
fashion; this should enable them to gradually acquire much needed
background expertise.  An initial goal of our work therefore has been
to build up a starting basis of mechanized deontic logics at layer L1
and ethico-legal domain theories at layer L2, and to make them
accessible for experimentation to students and researchers. Our
approach pays much attention to intuitive interaction within
sophisticated user-interfaces to access, explore, assess and modify
both the used foundational logic and logic combination and the encoded
domain theories. The tools we reuse are state-of-the-art reasoning
systems for classical logic that are actively developed by the
deduction systems community. That relieves us from resource intensive
investments in the implementation and maintenance of new
technology. However, to enable this reuse of technology, a bridge
between deontic logics (and their combinations with other modal
logics) and classical logic was needed. This bridge is provided by the
SSE approach.
 
The framework, methodology and technology we contribute requires some
modest background knowledge to be acquired.
The logic developers' perspective, as relevant at \logikey\ layer L1,
is a bit more ambitious and requires some familiarity with meta-logic
HOL (cf.~Sect.~\ref{sec:HOL}), and also a good mastery of the SSE
technique (cf.~Sect.~\ref{sec:SSEapproach}).  However, also at that
level we support the adoption of initial skills by providing a library
of example encodings of deontic logics, and other relevant logics and
logic combinations, to start with; this is helpful, since it, among
others, enables copying and pasting from these encodings. Several
successful student projects at BSc, MSc and PhD level meanwhile
provide good evidence for the practical relevance of our approach at
the developers level~\cite{J46,MederMasters,C76,C77,C62,zahoransky19,GewirthProofAFP,C82,J38,J47,J50}.

\subsection{Logic engineering and implementation methodology: SSEs in
  HOL} \label{sec:SSEapproach}

Our expressive reasoning framework, depicted graphically in
Fig.~\ref{fig:framework}, is based on the SSE
approach~\cite{J23,J41}. HOL is utilized in this approach as a
universal meta-logic in which different deontic logics are
semantically embedded in a \emph{shallow} way by directly encoding
their semantics in meta-logic HOL; commonalities in the semantics of
both logics are thereby typically shared as much as
possible.\footnote{\label{foot:deep}\emph{Shallow semantical
    embeddings} are different from \emph{deep embeddings} of a target
  logic. In the latter case the syntax of the target logic is
  represented using an inductive data structure (e.g., following the
  definition of the language).  The semantics of a formula is then
  evaluated by recursively traversing the data structure, and
  additionally a proof theory for the logic maybe be encoded. Deep
  embeddings typically require technical inductive proofs, which
  hinder proof automation, that can be avoided when shallow semantical
  embeddings are used instead. For more information on shallow and
  deep embeddings we refer to the literature
  \cite{DeepShallow,DeepShallow2}.}
  
We have extended this approach in our project for ambitious deontic
logics, their extensions and combinations with other relevant logics.
This enables the use of interactive proof assistants, such as
\isabellehol, which comes with a sophisticated user-interface and, in
addition, integrates various state-of-the-art reasoning tools. The ATP
systems integrated with \isabellehol\ via the sledgehammer
\cite{Sledgehammer} tool comprise higher-order ATP systems,
first-order ATP systems and satisfiability modulo theories (SMT)
solvers, and many of these systems internally again employ efficient
SAT solving technology.  \isabellehol\ also provides two model
finders, Nitpick \cite{Nitpick} and Nunchaku \cite{Nunchaku}.

The SSE approach utilizes the syntactic capabilities of the
higher-order theorem prover to represent the semantics of a target
logic and  to define the original syntax of the target theory
within the prover. The overall infrastructure, in combination with the
SSE approach, meets our demanding requirements regarding flexibility
along different axes; cf.~Sect.~\ref{sec:tools}.

An initial focus in the SSE approach has been on quantified modal
logics~\cite{J23}. One core aspect is that the standard translation
\cite{DBLP:books/el/RV01/OhlbachNRG01} from propositional modal logic
to first-order logic can be semantically embedded, in HOL without
requiring an external translation mechanism. The modal operator
$\Diamond$, for example, can be explicitly defined by the
$\lambda$-term
$\lambda \varphi. \lambda w. \exists v. (R w v \wedge \varphi v)$,
where $R$ denotes the accessibility relation associated with
$\Diamond$. This definition, however, can be hidden from the user, who
can construct now modal logic formulas involving $\Diamond\varphi$ and
use them to represent and prove theorems.

Most importantly, however, such an embedding of modal logic operators
in HOL can be extended to also include quantifiers.  We briefly
illustrate this idea using an example (omitting types, as above, for
better readability).  First, it is relevant to note that
$\forall x. \phi x$ is shorthand in HOL for $\Pi (\lambda x. \phi x)$,
where the logical constant symbol $\Pi$ is given an obvious semantics,
namely to check whether the set of objects denoted by
$(\lambda x. \phi x)$ is the set of all objects (of the respective
type). $\exists x. \phi x$ is hence shorthand for
$\neg \Pi (\lambda x. \neg \phi x)$.  The important and interesting
aspect thus is that additional binding mechanisms for universal and
existential quantifiers can be avoided in HOL by reusing
$\lambda$-notation. This principle can now be applied also to obtain
SSEs for quantified modal logics and, analogously, for many other
quantified non-classical logics.  For example,
$\Diamond \forall x. Px$ is represented as
$\Diamond \Pi' (\lambda x. \lambda w. P x w)$, where $\Pi'$ stands for
the $\lambda$-term
$\lambda \Phi . \lambda w . \Pi(\lambda x . \Phi x w)$ and where the
$\Diamond$ gets resolved as described above. The following series of
conversions explains this encoding in more detail:
$$\begin{tabular}{lll} \label{conversion}
    $\Diamond \forall x. P x$  & $\equiv$  & $\Diamond  \Pi' (\lambda x. \lambda w. P x w)$\\
                               & $\equiv$  & $\Diamond  ((\lambda \Phi . \lambda w . \Pi(\lambda x . \Phi x w)) (\lambda x. \lambda w. P x w))$\\
                               & $\equiv$  & $\Diamond  (\lambda w . \Pi(\lambda x . (\lambda x. \lambda w. P x w) x w))$\\
                               & $\equiv$  & $\Diamond  (\lambda w . \Pi(\lambda x . P x w))$\\
                               & $\equiv$  & $(\lambda \varphi. \lambda w. \exists v. (R w v \wedge \varphi v)) (\lambda w . \Pi(\lambda x . P x w))$\\
                               & $\equiv$  & $(\lambda
                                             \varphi. \lambda
                                             w. \neg \Pi
                                             (\lambda v . \neg
                                             (R w v \wedge \varphi v))) (\lambda w . \Pi(\lambda x . P x w))$\\
                               & $\equiv$  & $(\lambda w. \neg
                                             \Pi (\lambda v
                                             . \neg (R w v \wedge  (\lambda w . \Pi(\lambda x . P x w)) v)))$\\
                               & $\equiv$  & $(\lambda w. \neg
                                             \Pi  (\lambda v
                                             . \neg (R w v \wedge \Pi(\lambda x . P x v)))) $\\
                               & $\equiv$  & $(\lambda w. \exists v . R w v \wedge \forall x . P x v) $                             
  \end{tabular}$$
  This illustrates the embedding of  $\Diamond \forall x P x$ in HOL.\footnote{In the implementation of our approach in \isabellehol\ such conversions are hidden by default, so that the user may interact with the system at the level of the target logic and enter formulas such as $\Diamond \forall x. P x$. Definition unfolding is handled in the background, but can made visible upon request by the user.}
  Moreover, this embedding can be accompanied with different notions of
  validity. For example, we say $\Diamond \forall x. P x$ is 
  globally valid (valid for all worlds $w$) if and only if $\forall w. ((\Diamond \forall x. P x)\ w)$ holds. Local validity for a particular actual world, denoted by a constant symbol $aw$,   then
  amounts  to checking whether $((\Diamond \forall x. P x)\ aw)$ is true in HOL.

  What has been sketched above is an SSE for a first-order quantified
  modal logic K with a possibilist notion of quantification. However,
  depending on the type we assign to variable $x$ in
  $\Diamond \forall x. P x$, the sketched solution scales for
  arbitrary higher-order types. Since provers such as \isabellehol\
  and \leoIII\ support restricted forms of polymorphism, respective
  universal and existential quantifiers for the entire type hierarchy
  can be introduced with a single definition.\footnote{See,
    e.g.,~lines 14--17 in Fig.~\ref{fig:e}, where respective
    polymorphic quantifier definitions, including binder notation, are
    provided for \AA qvist's system \textbf{E}.}  Further details on
  the SSE of quantified modal logics in HOL, including a proof of
  faithfulness, are available in the literature \cite{J23}. Standard
  deontic logic (modal logic KD) can easily be obtained from this
  work. To do so we simply postulate in meta-logic HOL that the
  accessibility relation $R$ underlying the $\Diamond$ operator is
  serial.  The corresponding \textbf{D} axiom
  $\Box \varphi \supset \neg \Box \neg \varphi$, or, equivalently,
  $\neg(\Box \varphi \wedge \Box \neg \varphi)$, then becomes
  derivable as a corollary from this postulate and the SSE (and so
  does the K-schema and the necessitation rule, already in base logic
  K).  Further emendations of the presented framework to obtain
  multi-modal logics and an actualist notion of quantification have
  been proposed by Benzm\"uller {\em et al.}; cf.~\cite{J41} and the
  references therein for further details.

  \subsection{Meta-logic HOL} \label{sec:HOL} HOL has its roots in the
  logic of Frege's Begriffsschrift~\cite{frege79:_begrif}.  However,
  the version of HOL as addressed here refers to a simply typed logic
  of functions, which has been put foward by Church \cite{J43}.  It
  provides $\lambda$-notation, as an elegant and useful means to
  denote unnamed functions, predicates and sets.  Types in HOL
  eliminate paradoxes and inconsistencies.
        
  To keep this article sufficiently self-contained we briefly
  introduce HOL; the reader may want to skip this subsection and get
  back again later. More detailed information on HOL and its
  automation can be found in the literature \cite{J43,J6,B5}.

\begin{definition}[Types]
  The set ${T}$ of \emph{simple types} in HOL is freely generated from
  a set of \emph{basic types} $BT \supseteq \{o,i\}$ using the
  function type constructor $\typearrow$. Usually, type $o$ denotes
  the (bivalent) set of Booleans, and $i$ denotes a non-empty set of
  individuals. Further base types may be added.
\end{definition}

\begin{definition}[Terms and Formulas]
  The \emph{terms} of HOL are defined as follows (where $C_{\alpha}$
  denotes typed constants and $x_{\alpha}$ typed variables distinct
  from $C_\alpha$; $\alpha,\beta,o \in {T}$):
  \begin{align*}
    s,t ::= \;\; & C_{\alpha} \mid x_{\alpha} \mid (\lambda x_{\alpha}. s_{\beta})_{\alpha \typearrow \beta} \mid 
                   (s_{\alpha \typearrow \beta}\, t_\alpha)_{\beta} 
  \end{align*}

  Complex typed HOL terms are thus constructed via
  \emph{$\lambda$-abstraction} and \emph{function application}, and
  HOL terms of type $o$ are called formulas.
\end{definition}

\sloppy As \emph{primitive logical connectives} we choose
$\neg_{o \typearrow o},\vee_{o \typearrow o \typearrow o}$ and
$\Pi_{(\alpha \typearrow o) \typearrow o}$ (for each type $\alpha$),
that is, we assume that these symbols are always contained in the
signature.
\emph{Binder notation} $\forall x_{\alpha}. s_o$ is used as an
abbreviation for
$\Pi_{(\alpha \typearrow o)\typearrow o}\lambda x_{\alpha}. s_{o}$.
Additionally, \emph{description or choice operators}
$\epsilon_{(\alpha\typearrow o)\typearrow \alpha}$ (for each type
$\alpha$) or \emph{primitive equality}
$=_{\alpha\typearrow \alpha \typearrow o}$ (for each type $\alpha$),
abbreviated as $=^\alpha$, may be added.  From the selected set of
primitive logical connectives, other logical connectives can be
introduced as abbreviations. Equality can also be defined by
exploiting Leibniz' principle, expressing that two objects are equal
if they share the same properties. Type information as well as
brackets may be omitted 
if obvious from the context or irrelevant.

We consider two terms to be \emph{equal} if the terms are the same up
to the names of bound variables (i.e., $\alpha$-conversion is handled
implicitly).

\emph{Substitution} of a term $s_{\alpha}$ for a variable $x_{\alpha}$
in a term $t_{\beta}$ is denoted by $[s/x]t$. Since we consider
$\alpha$-conversion implicitly, we assume the bound variables of $t$
are disjunct from the variables in $s$ (to avoid variable
capture).

\sloppy Prominent operations and relations on HOL terms include
\emph{$\beta\eta$-normaliza\-tion} and \emph{$\beta\eta$-equality},
\emph{$\beta$-reduction} and \emph{$\eta$-reduction}: a
\emph{$\beta$-redex} $(\lambda x. s)t$ $\beta$-reduces to $[t/x]s$; an
\emph{$\eta$-redex} $\lambda x. (s x)$ where variable $x$ is not free
in $s$, $\eta$-reduces to $s$.  It is well known, that for each simply
typed $\lambda$-term there is a unique \emph{$\beta$-normal form} and
a unique \emph{$\beta\eta$-normal form}.  Two terms $l$ and $r$ are
\emph{$\beta\eta$-equal}, denoted as $l =_{\beta\eta} r$, if their
$\beta\eta$-normal forms are identical (up to $\alpha$-conversion).
Examples of $\lambda$-conversions have been presented on
p.~\pageref{conversion} (types were omitted there).

The semantics of HOL is well understood and thoroughly documented in
the literature \cite{J6,Andrews:gmdacitt72}. The semantics of choice
for our work is Henkin \cite{Henkin50}'s general semantics. The
following sketch of standard and Henkin semantics for HOL closely
follows Benzm\"uller and Andrews \cite{J43}.

A \emph{frame} is a collection $\{D_{\alpha}\}_{\alpha \in {T}}$ of
nonempty sets, called \emph{domains}, such that $D_{o} = \{T,F\}$,
where $T$ represents truth and $F$ falsehood, $D_\itype\not=\emptyset$
and $D_\utype\not=\emptyset$ are chosen arbitrary, and
$D_{\alpha \typearrow \beta}$ are collections of total functions
mapping $D_{\alpha}$ into $D_{\beta}$.

\begin{definition}[Interpretation]
  An \emph{interpretation} is a tuple
  $\langle \{D_{\alpha}\}_{\alpha \in {T}}, I \rangle$, where
  $\{D_{\alpha}\}_{\alpha \in {T}}$ is a frame, and where function $I$
  maps each typed constant symbol $c_{\alpha}$ to an appropriate
  element of $D_{\alpha}$, which is called the \emph{denotation} of
  $c_{\alpha}$. The denotations of $\neg,\vee$ and
  $\Pi_{(\alpha \typearrow o)\typearrow o}$ are always chosen as
  usual. A variable assignment $\phi$ maps variables $X_{\alpha}$ to
  elements in $D_{\alpha}$.
\end{definition}

\begin{definition}[Henkin model]
  An interpretation is a \emph{Henkin model (general model)} if and
  only if there is a binary valuation function ${V}$, such that
  ${V}(\phi,s_{\alpha}) \in D_{\alpha}$ for each variable assignment
  $\phi$ and term $s_{\alpha}$, and the following conditions are
  satisfied for all $\phi$, variables $x_\alpha$, constants
  $C_\alpha$, and terms
  $l_{\alpha \typearrow \beta}, r_\alpha, s_\beta$
  ($\alpha,\beta\in{T}$): ${V}(\phi,x_{\alpha}) = \phi(x_\alpha)$,
  ${V}(\phi,C_{\alpha}) = I(C_{\alpha})$,
  ${V}(\phi,l_{\alpha \typearrow \beta}\, r_{\alpha}) =
  ({V}(\phi,l_{\alpha \typearrow \beta}) {V}(\phi,r_{\alpha}))$,
  and ${V}(\phi,\lambda x_{\alpha}. s_{\beta})$ represents the
  function from $D_{\alpha}$ into $D_{\beta}$ whose value for each
  argument $z \in D_{\alpha}$ is ${V}(\phi[z/x_{\alpha}], s_\beta)$,
  where $\phi[z/x_\alpha]$ is that assignment such that
  $\phi[z/x_\alpha](x_\alpha) = z$ and
  $\phi[z/x_\alpha]y_\beta = \phi y_\beta$ when
  $y_\beta \not=x_\alpha$.
\end{definition}

If an interpretation
${H} = \langle \{D_\alpha\}_{\alpha \in {T}}, I\rangle$ is an
\emph{Henkin model} the function ${V}$ is uniquely determined and
${V}(\phi,s_\alpha)\in D_\alpha$ is called the \emph{denotation} of
$s_\alpha$.

\begin{definition}[Standard model]
  ${H} = \langle \{D_\alpha\}_{\alpha \in {T}}, I\rangle$ is called a
  \emph{standard model} if and only if for all $\alpha$ and $\beta$,
  $D_{\alpha \typearrow \beta}$ is the set of all functions from
  $D_{\alpha}$ into $D_{\beta}$. Obviously each standard model is also
  a Henkin model.
\end{definition}
  
\begin{definition}[Validity]  
  A formula $c$ of HOL is \emph{valid} in a Henkin model ${H}$ if and
  only if ${V}(\phi,c) = T$ for all variable assignments $\phi$. In
  this case we write ${H}\models^{HOL} c$.  $c$ is (Henkin) valid,
  denoted as $\models^{HOL} c$, if and only if ${H}\models^{HOL} c$
  for all Henkin models ${H}$.
\end{definition}

The following theorem verifies that the logical connectives behave as
intended. The proof is straightforward.

\begin{theorem}\label{prop:trivial}
  \sloppy Let ${V}$ be the valuation function of a Henkin model
  ${H}$. The following properties hold for all variable assignments
  $\phi$, terms $s_o,t_o,l_\alpha,r_\alpha$, and variables
  $x_\alpha,w_\alpha$: ${V}(\phi,\neg s_o)=T$ if and only if
  ${V}(\phi,s_o)=F$, ${V}(\phi,s_o \vee t_o)=T$ if and only if
  ${V}(\phi,s_o)=T$ or
  ${V}(\phi,t_o)=T$, 
  ${V}(\phi,\forall x_\alpha. s_o)= {V}(\phi,\Pi_{(\alpha\typearrow o)
    \typearrow o}\lambda x_\alpha. s_o)=T$
  if and only if for all $v\in D_\alpha$ holds
  ${V}(\phi[v/w_\alpha],(\lambda x_\alpha s_o)\,w_\alpha)=T$, and
  if~$l_\alpha =_{\beta\eta} r_\alpha$ then
  ${V}(\phi,l_\alpha)={V}(\phi,r_\alpha)$.
\end{theorem}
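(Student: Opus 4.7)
The plan is to prove each of the four claims by unfolding the clauses in the definition of the Henkin valuation $V$ together with the stipulated ``usual'' denotations of the primitive constants $\neg$, $\vee$ and $\Pi_{(\alpha\typearrow o)\typearrow o}$.

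For clauses 1 and 2, I would apply the application clause of $V$ to rewrite ${V}(\phi,\neg s_o) = I(\neg)({V}(\phi,s_o))$ and ${V}(\phi,s_o\vee t_o) = I(\vee)({V}(\phi,s_o))({V}(\phi,t_o))$, and then observe that $I(\neg)$ and $I(\vee)$ are, by the ``always chosen as usual'' clause in the definition of interpretation, the Boolean negation and disjunction on $D_o=\{T,F\}$. Both equivalences then follow by truth-table inspection.

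For clause 3, the first equality is purely the binder-notation abbreviation $\forall x_\alpha.\,s_o \equiv \Pi_{(\alpha\typearrow o)\typearrow o}\,\lambda x_\alpha.\,s_o$. For the second, I unfold ${V}(\phi,\Pi(\lambda x_\alpha.\,s_o)) = I(\Pi)({V}(\phi,\lambda x_\alpha.\,s_o))$; since $I(\Pi_{(\alpha\typearrow o)\typearrow o})$ is the map sending $f\in D_{\alpha\typearrow o}$ to $T$ iff $f(v)=T$ for every $v\in D_\alpha$, and since by the $\lambda$-clause ${V}(\phi,\lambda x_\alpha.\,s_o)$ is precisely the function $z\mapsto {V}(\phi[z/x_\alpha],s_o)$, the value is $T$ exactly when ${V}(\phi[v/x_\alpha],s_o)=T$ for all $v\in D_\alpha$. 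It then remains to rewrite this in terms of the auxiliary fresh variable $w_\alpha$: applying the application and $\lambda$-clauses gives ${V}(\phi[v/w_\alpha],(\lambda x_\alpha.\,s_o)\,w_\alpha) = {V}(\phi[v/w_\alpha][v/x_\alpha],s_o)$, which equals ${V}(\phi[v/x_\alpha],s_o)$ provided $w_\alpha$ does not occur free in $s_o$ (which we may assume by the $\alpha$-conversion convention).

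The main obstacle is clause 4, the invariance of ${V}$ under $\beta\eta$-equality. I would first prove a standard substitution lemma by induction on term structure: for all $\phi$, terms $t_\beta$ and $s_\alpha$, and variables $x_\alpha$,
\begin{equation*}
{V}(\phi,[s/x]t) \;=\; {V}(\phi[{V}(\phi,s)/x],\,t),
\end{equation*}
with the bound-variable convention handling the capture side-condition. Applied to a $\beta$-redex $(\lambda x.\,t)s$, unfolding the application and $\lambda$-clauses yields ${V}(\phi,(\lambda x.\,t)s) = {V}(\phi[{V}(\phi,s)/x],t) = {V}(\phi,[s/x]t)$ by the lemma. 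For an $\eta$-redex $\lambda x.(s\,x)$ with $x$ not free in $s$, the $\lambda$-clause shows ${V}(\phi,\lambda x.(s\,x))$ is the function $z\mapsto {V}(\phi[z/x],s\,x)={V}(\phi,s)(z)$, which is pointwise equal to ${V}(\phi,s)\in D_{\alpha\typearrow\beta}$ and hence equal to it as an element of the function space $D_{\alpha\typearrow\beta}$. Closing under congruence (by a second induction on the context in which the redex sits) and under the symmetric-transitive closure gives the result for arbitrary $l =_{\beta\eta} r$. The only delicate point is the capture-avoidance bookkeeping in the substitution lemma, which is routine given that $\alpha$-conversion is treated implicitly.
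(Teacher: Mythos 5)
Your proof is correct; the paper itself gives no argument beyond declaring the proof straightforward, and your unfolding of the valuation clauses for $\neg$, $\vee$ and $\Pi$, together with the standard substitution lemma for the $\beta\eta$-invariance claim, is exactly the intended routine argument. The only implicit ingredient worth naming explicitly is the coincidence lemma (that ${V}(\phi,s)$ depends only on the values of $\phi$ on the free variables of $s$), which you rely on when discarding the update at $w_\alpha$ in clause~3; it is proved by the same kind of induction as your substitution lemma and is equally routine.
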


A HOL formula $c$ that is Henkin-valid is obviously also valid in all
standard models. Thus, when a Henkin-sound theorem prover for HOL
finds a proof for $c$, then we know that $c$ is also theorem in
standard semantics.  More care has to be taken when model finders for
HOL return models or countermodels, since theoretically these models
could be non-standard. In practice this has not been an issue, since
the available model finders for HOL so far return finite models only,
and finite models in HOL are known to be standard \cite{J43}. Most importantly, however, the
returned models in \isabellehol\ can always be inspected by the user.

\section{Theories of normative reasoning covered by the \logikey\
  approach} \label{sec:deontic-logic}
 
Before explaining what theories of normative reasoning are covered by
the SSE approach, we briefly survey the area of deontic logic.  The
logics described below are all candidates for \logikey\ layer L1.

\subsection{Deontic logic}

Deontic logic~\cite{wright:51,deon:handbook,textbook18} is the field 
of logic that is concerned with normative concepts such
as obligation, permission, and prohibition. Alternatively, a deontic
logic is a formal system capturing the essential logical features of
these concepts. Typically, a deontic logic uses $Op$ to mean that ``it
is obligatory that $p$'', or ``it ought to be the case that $p$'', and
$Pp$ to mean that ``it is permitted, or permissible, that $p$''.
Deontic logic can be used for reasoning about normative multiagent
systems, i.e.,~about multiagent organizations with normative systems
in which agents can decide whether to follow the explicitly
represented norms, and the normative systems specify how, and to which
extent, agents can modify the norms. 
Normative multiagent systems need to combine normative reasoning with
agent interaction, and thus raise the challenge to relate the logic of
normative systems to aspects of agency.

There are two main paradigms in deontic logic:
the modal logic paradigm;  the norm-based paradigm.
This
classification  is
not meant to be exhaustive or exclusive. Some frameworks, like
adaptive deontic logic
\cite{DBLP:journals/sLogica/StrasserBP16,strasser}, combine the two.
We briefly
describe the two paradigms  in the next two subsections.

\subsubsection{Modal logic paradigm} \label{SDL}

Traditional (or ``standard'') deontic logic (SDL) is a normal
propositional modal logic of type KD, which means that it extends the
propositional tautologies with the axioms $K$:
$O(p\rightarrow q)\rightarrow (Op \rightarrow Oq)$ and $D$:
$\neg(Op \wedge O\neg p)$, and it is closed under the inference rules
\emph{modus ponens} $p, p\rightarrow q / q$ and \emph{generalization}
or \emph{necessitation} $p / Op$. Prohibition and permission are
defined by $Fp=O\neg p$ and $Pp=\neg O \neg p$.  SDL is an unusually
simple and elegant theory. An advantage of its modal-logical setting
is that it can easily be extended with other modalities such as
epistemic or temporal operators and modal accounts of action.

Dyadic deontic logic (DDL) introduces a conditional operator $O(p/q)$,
to be read as ``it ought to be the case that $p$, given $q$''. A
number of DDLs have been proposed to deal with so-called
contrary-to-duty (CTD) reasoning, cf.~Carmo and Jones~\cite{Carmo2002} for an overview
on this area. In brief, the CTD problem is mostly about how to
represent conditional obligation sentences dealing with norm
violation, and an example is provided Sect.~\ref{sec:GDPR}. Two
landmark DDLs are the DDL proposed by Hansson~\cite{ddl:H69}, \AA
qvist~\cite{ddl:A02,ddl:parent15} and Kratzer~\cite{krat02}, and the
one proposed by Carmo and Jones~\cite{Carmo2002,CJ13}. A notable
feature of the DDL by \AA qvist is that it also provides support for
reasoning about so-called \emph{prima facie}
obligations~\cite{ddl:Al94}. A prima facie obligation is one that
leaves room for exceptions. A limitation of the DDL by Carmo and Jones
has been pointed out by Kjos-Hanssen \cite{KH2017}.

To enable ethical agency a model of decision-making needs to be
integrated in the deontic frames. Horty's deontic STIT
logic~\cite{horty}, which combines deontic logic with a modal logic of
action, has been proposed as a starting point. The semantic condition
for the STIT-ought is a utilitarian generalization of the SDL view
that ``it ought be that $A$'' means that $A$ holds in all deontically
optimal worlds.

\subsubsection{Norm-based paradigm}

The term ``norm-based'' deontic logic has been coined by Hansen
\cite{Hansen14} to refer to a family of frameworks analysing the
deontic modalities not with reference to a set of possible worlds,
some of them being more ideal than others, but with reference to a set
of explicitly given norms.  In such a framework, the central question
is: given some input (e.g.,~a fact) and a set of explicitly given
conditional norms (a normative system), what norms apply? Thus, the
perspective is slightly different from the traditional setting,
focusing on inference patterns~\cite{pub59}.

Examples of norm-based deontic logics include the input/output (I/O)
logic of Makinson \& van der Torre
\cite{DBLP:journals/jphil/MakinsonT00}, Horty's theory of reasons
\cite{Horty12}, which is based on Reiter's default logic, and Hansen's
logic of prioritized conditional obligations~\cite{H08,Hansen14}. 

By way of illustration, we document further I/O logic. The knowledge base takes the form of a set of rules
of the form ($a$,$b$) to be read as ``if $a$ then $b$''. The key
feature of I/O logic is that it uses an operational semantics, based
on the notion of detachment, rather than a truth-functional one in
terms of truth-values and possible worlds. On the semantical side, the
meaning of the deontic concepts is given in terms of a set of
procedures, called I/O operations, yielding outputs (e.g.,
obligations) for inputs (facts). On the syntactical side, the
proof-theory is formulated as a set of inference rules manipulating
pairs of formulas rather than individual formulas.  The framework
supports functionalities that are often regarded as characteristic of
the legal domain, and thus required to enable effective legal
reasoning. Below, we list the two most elementary requirements that
can be expected of a framework, if it is to deal with the legal
domain; they are taken from Palmirani \& colleagues \cite{ruleml}.
\begin{enumerate}
\item Support for the modeling of \textit{constitutive rules}, which
  define concepts or constitute activities that cannot exist without
  such rules (e.g.,~legal definitions such as ``property"), and
  \textit{prescriptive rules}, which regulate actions by making them
  obligatory, permitted, or prohibited.
\item Implementation of
  \textit{defeasibility}~\cite{gordon95,sartor2005}; when the
  antecedent of a rule is satisfied by the facts of a case (or via
  other rules), the conclusion of the rule presumably holds, but is
  not necessarily true.
\end{enumerate}
Other norm-based frameworks provide support for the aforementioned  two
functionalities. An example of such a framework is formal argumentation \cite{Pigozzi2018}.
 The question of how to embed argumentation frameworks into HOL is ongoing and future work~\cite{C82}.

\subsection{Theories of normative reasoning implemented} \label{sec:3.2}
The following theories of normative reasoning have been
``implemented'' by utilising the SSE approach.
\begin{description}
\item{SDL:} All logics from the modal logic cube, including logic KD,
  i.e.~SDL, have meanwhile been faithfully implemented in the SSE
  approach \cite{J23}. These implementations scale for first-order and
  even higher-order extensions.
\item{DDL:} the DDL by \AA qvist \cite{ddl:A02,ddl:parent15} and the
  DDL by Carmo and Jones \cite{CJ13}: Faithful SSEs of these logics in
  \isabellehol\ are already available~\cite{J45,C71}, and most
  recently the ATP system \leoIII\ has been adapted to accept DDL as
  input \cite{steenportrayal,J51}.
\item{I/O logic \cite{DBLP:journals/jphil/MakinsonT00}:} The main
  challenge comes from the fact that the framework does not have a
  truth-functional semantics, but an operational one. First
  experiments with the SSE of the I/O-operator \emph{out}$_1$ (called
  simple-minded) and \emph{out}$_2$ (called basic) in \isabellehol\
  have been presented in the literature~\cite{J46,R65}.
\end{description}

I/O logic variants have recently been studied
\cite{ParentT17,DBLP:conf/deon/ParentT18} and some
of these variants are related to certain non-normal modal logics,
e.g.,~conditional logics with a selection function or a neighbourhood semantics. Note that the embedding of logics of the latter kind has already been studied in previous work~\cite{C37,J31}.

\section{Sample \logikey\ embedding: \AA qvist's system {\bf E} in HOL} \label{sec:example-section}

In this section, to illustrate our approach, we describe our embedding
of \AA qvist \cite{ddl:A02}'s dyadic deontic logic in HOL. The system
is called {\bf E}.

We give this example, because all the development steps foreseen at
layer L1 of \logikey\ (cf.~Sect.~\ref{sec:logikey}) have been caried
out for this system.  In particular, the embedding has been shown to
be faithfull---this is one of our key success criteria at layer
L1. Moreover, this logic has been tested against available benchmarks,
like those related to CTD reasoning. For sure, one does not need
automated theorem provers to see that the system can handle this type
of reasoning while SDL cannot. The treatment of CTD reasoning motived
DDL in the first place. However, from pen and paper results alone one
cannot conclude that an implementation of a logic will result in a
practically useful system. Such an orthogonal question must be
answered by empirical means, in particular, when first-order or
higher-order extensions, or combinations with other logics, are
required. Our SSE-based implementation of system {\bf E} scales for
such extensions; a first-order extension will later be tested with a
simple CTD example structure in Sect.~\ref{sec:GDPR}.

\subsection{Target logic: system {\bf E}}

\begin{definition} The language of {\bf E} is generated by the
  following BNF:
$$\phi::= p \mid \neg \phi \mid \phi \wedge \phi \mid \square \phi \mid \bigcirc (\phi/\phi)$$
\end{definition}
$\square\phi$ is read as ``$\phi$ is settled as true", and
$\bigcirc(\psi/\phi)$ as ``$\psi$ is obligatory,
given~$\phi$". $\bigcirc \phi$ (``$\phi$ is unconditionally
obligatory") is short for $\bigcirc(\phi/\top)$.

\medskip Traditionally so-called preference models are used as models
for the language.
\begin{definition}
  A preference model $M=(W, \succeq, V)$ is a tuple where:
  \begin{compactitem}
  \item $W$ is a (non-empty) set of possible worlds;
  \item $\succeq$ is a binary relation over $W$ ordering the worlds
    according to their betterness or comparative goodness;
    $s \succeq t$ is read as ``$s$ is at least as good as $t$";
    $\succeq$ is called a preference or betterness
    relation;\footnote{``Preference relation" is a generic name used
      in different areas like the areas of conditional logic, rational
      choice theory, non-monotonic logic, and deontic
      logic. ``Betterness relation" is the name used in deontic
      logic. This choice of name is dictated by the intuitive
      interpretation given to the preference relation.}
  \item $V$ is a valuation assigning to each propositional letter $p$
    a set of worlds, namely the set of those worlds at which $p$
    holds.
  \end{compactitem}
\end{definition}
For {\bf E}, no specific properties are assumed of the preference or
betterness relation~$\succeq$. It is known that the assumptions of
reflexivity and totalness (every worlds are pairwise comparable) do
not ``affect" the logic; cf.~Theorem \ref{th1}.

Intuitively the evaluation rule for $\bigcirc (\psi/\phi)$ puts
$\bigcirc (\psi/\phi)$ true if and only if the best $\phi$-worlds,
according to $\succeq$, are all $\psi$-worlds.  Formally:
\newcommand{\best}{\mathrm{best}_\succeq}
\newcommand{\op}{\mathrm{opt}_\succeq}
\begin{definition} [Satisfaction]
  Given a preference model $M=(W, \succeq, V)$ and a world $s\in W$,
  we define the satisfaction relation $M,s\vDash \phi$ as usual,
  except for the following two new clauses:
  \begin{compactitem}
  \item $M,s\vDash \Box\phi $ iff for all $t$ in $M$, $M,t\vDash\phi $
  \item $M,s\vDash \bigcirc(\psi/\phi)$ iff
    $\op(||\phi||)\subseteq ||\psi||$
  \end{compactitem}
  where $\Vert \phi\Vert$ is the set of worlds at which $\phi$ holds
  and $\op(||\phi||)$ is the subset of those that are optimal
  according to $\succeq$:
  \begin{flalign*}
    \op (\Vert \phi\Vert) & = \{s\in \Vert\phi\Vert\mid \forall t
    \;\;(t \vDash \phi \rightarrow s\succeq t)
  \end{flalign*}
\end{definition}
\begin{definition} [Validity] A formula $\phi$ is valid in the class
  $\mathcal{P}$ of all preference models (notation:
  $\models^{\mathcal{P}}\phi$) if and only if, for all preference
  models $M$ and all worlds $s$ in $M$, $M,s\models\phi$.
\end{definition}

\begin{definition} {\bf E} is the proof system consisting of the
  following axiom schemata and rule schemata (the names are taken from
  Parent~\cite{DBLP:journals/jphil/Parent14}):
  \begin{flalign}
    &\phi\mbox{, where $\phi$ is a tautology from PL} \tag{PL}\\
    & \square (\phi\rightarrow \psi)\rightarrow (\square \phi \rightarrow \square \psi)\label{k}\tag{K}\\
    & \square \phi\rightarrow \square\square \phi  \label{4} \tag{4}\\
    & \neg\square \phi\rightarrow \square \neg \square \phi \label{5} \tag{5}\\
    & \bigcirc(\psi\rightarrow \chi/\phi)\rightarrow (\bigcirc(\psi/\phi)\rightarrow \bigcirc(\chi/\phi))  \label{cok} \tag{COK}\\
    & \bigcirc(\phi/\phi)  \label{id} \tag{Id}\\
    & \bigcirc(\chi/(\phi\wedge \psi)) \rightarrow \bigcirc((\psi\rightarrow \chi)/\phi)  \label{sh}\tag{Sh}\\
    & 	\bigcirc(\psi/\phi) \rightarrow \square \bigcirc(\psi/\phi)  \label{abs} \tag{Abs}\\
    & \square \psi\rightarrow \bigcirc(\psi/\phi) \label{nec} \tag{Nec}\\
    & \square (\phi\leftrightarrow \psi)\rightarrow (\bigcirc(\chi/\phi) \leftrightarrow \bigcirc(\chi/\psi)  ) \label{ext} \tag{Ext}\\
    &\mbox{If } \vdash \phi \mbox{ and } \vdash\phi\rightarrow\psi  \mbox{ then } \vdash\psi \tag{MP}\\
    &\mbox{If } \vdash \phi \mbox{ then } \vdash \square\phi
    \tag{N} \label{N}
  \end{flalign}
\end{definition}

The notions of theorem and consistency are defined as usual.  \medskip

In the work of \AA qvist, cf.~\cite[pp.~179--181]{ddl:A87} and
\cite[pp.~247--249]{ddl:A02} the question of whether {\bf E} is
complete with respect to its intended modeling has been left as an
open problem; it has been recently been answered by Parent
\cite{ddl:parent15}.  See Goble \cite{gob19} for a similar
axiomatization result of Hansson's original models.

The second and third clauses in the statement of Theorem \ref{th1}
mean that the assumptions of reflexivity and totalness do not have any
impact on the logic. The fact that these two properties are idle
remains largely unnoticed in the literature.
\begin{theorem}\label{th1} {\bf E} is sound and complete with respect
  to the following three classes of preference models:
  \begin{compactenum}
  \item the class of all preference models;
  \item the class of preference models in which the betterness
    relation is required to be reflexive;
  \item the class of preference models in which the betterness
    relation is required to be total (for all $s$ and $t$, either
    $s\succeq t$ or $t\succeq s$).
  \end{compactenum}
\end{theorem}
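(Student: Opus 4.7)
The plan is to prove soundness and completeness for class (1) via a canonical-model construction, and then argue separately that reflexivity and totalness of the betterness relation are idle, so that soundness and completeness transfer to classes (2) and (3).

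\textbf{Soundness.} I would verify each axiom and rule directly against the semantic clauses. The S5-style schemata K, 4, 5 and rule N fall out immediately from the fact that $\square$ is interpreted as a universal modality over the whole set $W$. The axiom Id reduces to $\op(\|\phi\|)\subseteq \|\phi\|$, which is part of the definition of $\op$. COK, Ext, and Nec unfold routinely from the semantic clause for $\bigcirc(\cdot/\cdot)$, and Abs follows from the observation that the truth value of $\bigcirc(\psi/\phi)$ depends only on $M$ and not on the evaluation world. Sh requires a short set-theoretic argument relating $\op(\|\phi\wedge\psi\|)$ and $\op(\|\phi\|)$, which in turn uses the semantic clause for implication at optimal worlds.

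\textbf{Completeness for class (1).} I would build the canonical model $M^c=(W^c,\succeq^c,V^c)$ with $W^c$ the set of maximal E-consistent sets, $V^c(p)=\{s\mid p\in s\}$, and a carefully designed $\succeq^c$. Following the approach of \cite{ddl:parent15}, one defines $s\succeq^c t$ via a condition expressing that $t$ does not violate any obligation respected by $s$. The truth lemma $M^c,s\vDash \phi \iff \phi\in s$ is then proved by induction on $\phi$; the boolean and $\square$-cases are routine, the latter using K, 4, 5 to obtain S5-canonicity of $\square$. The case for $\bigcirc(\psi/\phi)$ is the crux: the left-to-right direction uses Id, COK, Sh, and Ext to show that every $\succeq^c$-optimal $\phi$-world must satisfy $\psi$; the right-to-left direction uses a Lindenbaum-style saturation to construct a $\phi$-world in $\op(\|\phi\|)$ that refutes $\psi$ whenever $\bigcirc(\psi/\phi)\notin s$.

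\textbf{Classes (2) and (3), and the main obstacle.} Soundness transfers immediately to the smaller classes, since they are sub-classes of class (1). For completeness I would show that $\succeq^c$ can be taken reflexive, respectively total, without disturbing $\op(\|\phi\|)$ for any formula $\phi$, which by the truth lemma preserves all deontic truths; reflexivity is essentially immediate from Id, while totalness requires a careful orientation of incomparable pairs that does not shrink any already-determined optimal set. The main obstacle throughout is the truth lemma for $\bigcirc(\cdot/\cdot)$, and in particular the Lindenbaum step for the right-to-left direction, which must produce a maximally consistent set simultaneously containing $\phi\wedge\neg\psi$ and, via Abs, the same conditional obligations as $s$, while being $\succeq^c$-optimal among $\phi$-worlds. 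A secondary subtlety, bearing on totalness being truly idle, is that adding $\succeq^c$-edges can only shrink $\op$, so one must verify that no previously refuted $\bigcirc(\psi/\phi)$ is accidentally validated by totalization.
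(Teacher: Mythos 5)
The paper does not prove this theorem itself; it defers entirely to Parent \cite{ddl:parent15}, and the substance of that reference is precisely the step your sketch leaves open. Your soundness half is fine and matches the standard argument. The problem is the completeness half. As the paper notes just above the theorem, completeness of {\bf E} with respect to its intended preference models was left open from {\AA}qvist's work until Parent (2015); it is not a routine canonical-model exercise. Your sketch concentrates all of the difficulty into ``a carefully designed $\succeq^c$'' and the right-to-left case of the truth lemma for $\bigcirc(\cdot/\cdot)$, and then stops. Under the optimality rule, the Lindenbaum witness $a\ni\phi\wedge\neg\psi$ must satisfy $a\succeq^c b$ for \emph{every} $\phi$-world $b$ of the canonical model---a global condition. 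With any compliance-style definition of $\succeq^c$ (``$t$ does not violate any obligation respected by $s$''), the witness $a$ may violate obligations conditional on formulas other than $\phi$ that it happens to contain, and then it fails to dominate the corresponding worlds; arranging for $a$ to contain $\neg\psi$ and simultaneously be maximally compliant is exactly the delicate interplay of Sh, Ext and Abs that constitutes Parent's proof. Naming this as ``the main obstacle'' does not discharge it.

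There is also a concrete error in your treatment of classes (2) and (3). The operator $\mathrm{opt}_\succeq$ is \emph{monotone increasing} in $\succeq$: if $\succeq_1\,\subseteq\,\succeq_2$ then $\mathrm{opt}_{\succeq_1}(\Vert\phi\Vert)\subseteq\mathrm{opt}_{\succeq_2}(\Vert\phi\Vert)$, since enlarging the relation only makes it easier for a world to dominate all $\phi$-worlds. So adding edges (reflexive closure, totalization) \emph{grows} the optimal sets rather than shrinking them, and the danger is the opposite of the one you guard against: a previously true $\bigcirc(\psi/\phi)$ may become false because a new $\neg\psi$-world enters $\mathrm{opt}_\succeq(\Vert\phi\Vert)$, whereas a previously refuted one can never become validated. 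Likewise, ``reflexivity is essentially immediate from Id'' is unjustified: Id expresses $\mathrm{opt}_\succeq(\Vert\phi\Vert)\subseteq\Vert\phi\Vert$, which holds by definition of $\mathrm{opt}_\succeq$ in every model and says nothing about reflexivity of the canonical relation; under a compliance-based $\succeq^c$, a norm-violating world is typically not $\succeq^c$ itself. The viable route---and the one taken in the cited proof---is to show that the canonical relation one actually constructs can be taken reflexive and total from the outset, not to close off an arbitrary canonical relation after the fact.
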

\begin{proof} The proof can be found in Parent \cite{ddl:parent15}.
\end{proof}
\begin{theorem} The theoremhood problem in {\bf E} (``Is $\phi$ a
  theorem in {\bf E}?") is decidable.
\end{theorem}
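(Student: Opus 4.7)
The plan is to derive decidability by combining the recursive axiomatization of {\bf E} with a finite model property (FMP), and then appealing to soundness and completeness with respect to all preference models (Theorem~\ref{th1}). The axiom schemata of {\bf E} are finite and each is decidable, and the rules MP and N are effective, so the set of theorems of {\bf E} is recursively enumerable via exhaustive proof search. If I can show, in addition, that every non-theorem is falsified by some preference model whose size is bounded by a computable function of $|\phi|$, then non-theorems are also recursively enumerable (enumerate all finite preference models over the propositional letters occurring in $\phi$ up to that bound, and check each one semantically). Decidability of theoremhood follows.

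The core task is therefore to establish the FMP, for which I would use a filtration argument. Let $\Sigma$ be the set of subformulas of $\phi$, a finite set. Given a preference model $M = (W, \succeq, V)$ and a world $s_{0}$ with $M, s_{0} \not\vDash \phi$, define $w \sim v$ iff $w$ and $v$ agree on all formulas in $\Sigma$; the quotient $W/{\sim}$ has at most $2^{|\Sigma|}$ classes. Because $\square$ is interpreted as the universal modality (``for all $t$ in $M$"), its filtration is immediate: $\square\psi$ with $\psi\in\Sigma$ holds at $[w]$ in the quotient iff $\psi$ holds at every class. The valuation $V'$ is inherited on propositional letters appearing in $\Sigma$. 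Crucially, Theorem~\ref{th1} shows that {\bf E} imposes no frame conditions on $\succeq$, so I have complete freedom in designing the filtered betterness relation $\succeq'$ on $W/{\sim}$; the only requirement is that the truth of every $\bigcirc(\chi/\psi)\in\Sigma$ be preserved.

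The hard part is precisely the design of $\succeq'$. The optimality condition $\op(\Vert\psi\Vert) \subseteq \Vert\chi\Vert$ is a global property that depends on all $\succeq$-comparisons between $\psi$-worlds, and a naive quotient may collapse distinct optima or spuriously create new ones. A workable strategy is to define $[w] \succeq' [v]$ iff, for every antecedent $\psi$ appearing in some $\bigcirc(\chi/\psi)\in\Sigma$ with $v \vDash \psi$, there exists $w' \sim w$ with $w' \succeq v$, and then prove a truth-lemma by induction on $\Sigma$-formulas, using axiom Ext to substitute $\sim$-equivalent antecedents. Verifying that this choice preserves the truth of each $\bigcirc(\chi/\psi)\in\Sigma$ in both directions, and in particular that $\op$ computed in the quotient agrees with $\op$ computed in $M$ up to $\sim$, is the delicate technical step on which the whole argument rests. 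As a cleaner fallback, one could avoid building the filtered preference by hand and instead refine the completeness argument of Parent~\cite{ddl:parent15} that establishes Theorem~\ref{th1}, whose canonical-style construction can be pruned to yield a countermodel of size bounded by $2^{|\Sigma|}$, from which decidability follows by the same enumeration procedure described in the first paragraph.
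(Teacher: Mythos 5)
The paper does not actually present a proof of this theorem: it defers entirely to Parent's Handbook chapter \cite{parent19}, which establishes decidability by showing that {\bf E} is complete with respect to a class of \emph{finite} preference models. Your overall skeleton --- recursive enumerability of theorems by proof search, plus a bounded finite model property, plus Theorem~\ref{th1} --- is therefore the right architecture and matches the cited argument in spirit. The problem is that the entire content of the theorem lives in the step you leave open, and the concrete construction you propose for the filtered betterness relation does not work. First, the condition ``$[w] \succeq' [v]$ iff for every antecedent $\psi \in \Sigma$ with $v \vDash \psi$ there exists $w' \sim w$ with $w' \succeq v$'' is phrased in terms of a particular representative $v$, so it is not yet well defined on equivalence classes. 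Second, and more seriously, it fails to preserve truth of conditionals from $M$ to the quotient: from $[s] \succeq' [t]$ for every $\psi$-class $[t]$ you only obtain, for each $\psi$-world $t$, \emph{some} $s' \sim s$ with $s' \succeq t$, and this $s'$ may vary with $t$. Hence no single world of $M$ need dominate all $\psi$-worlds, so $\mathrm{opt}_{\succeq}(\Vert\psi\Vert)$ can be empty in $M$ (making $\bigcirc(\chi/\psi)$ vacuously true there) while the quotient acquires a fresh optimal $\psi$-class falsifying $\chi$. This is exactly the ``spuriously created optima'' failure you name yourself, and your definition does not rule it out; only the converse direction (falsity of a conditional is preserved into the quotient) goes through.

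Your fallback --- adapting the completeness construction behind Theorem~\ref{th1} to produce a countermodel of size bounded by $2^{|\Sigma|}$ --- is essentially what the cited proof does: the finite model is built directly from maximal consistent subsets of a finite closure of $\phi$, with the betterness relation defined \emph{syntactically} so that the truth lemma for $\bigcirc(\cdot/\cdot)$ can be proved, rather than by quotienting a given semantic model. As it stands, your submission identifies the correct two-part strategy but does not supply the one lemma (the finite model property) on which decidability actually rests, so it is a proof plan rather than a proof.
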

\begin{proof}
  The proof can be found in Parent \cite{parent19}.
\end{proof}

Stronger systems may be obtained by adding further constraints on
$\succeq$, like transitivity and the so-called limit assumption, which
rules out infinite sequences of strictly better worlds.

\subsection{Embedding of {\bf E} in HOL}

The formulas of {\bf E} are identified in our SSE with certain HOL
terms (predicates) of type $i \typearrow o$, where terms of type $i$
are assumed to denote possible worlds and $o$ denotes the (bivalent)
set of Booleans.
Type $i \typearrow o$ is abbreviated as $\proptype$ in the
remainder. The HOL signature is assumed to contain the constant symbol
$R_{\itype \typearrow \proptype }$. Moreover, for each propositional
symbol $p^j$ of {\bf E}, the HOL signature must contain the
corresponding constant symbol $P^j_\tau$. Without loss of generality,
we assume that besides those symbols and the primitive logical
connectives of HOL, no other constant symbols are given in the
signature of HOL.

\begin{definition}\label{trans}
  The mapping $\lfloor \cdot \rfloor$ translates a formula $\varphi$
  of {\bf E} into a formula $\lfloor \varphi \rfloor$ of HOL of type
  $\proptype$. The mapping is defined recursively in the usual way
  \cite{J41}, except for the following two new clauses:
  \[
  \begin{array}{lcl}
    \lfloor   \Box \phi   \rfloor &=&  \Box_{ \proptype \typearrow \proptype }\,\lfloor \phi \rfloor \\
    \lfloor  \bigcirc(\psi/\phi)  \rfloor &=&  \bigcirc_{\proptype \typearrow \proptype \typearrow \proptype}\,\lfloor \psi \rfloor \lfloor \phi\rfloor    \\
  \end{array}
  \]
  where $\Box_{\proptype \typearrow \proptype }$ and
  $\bigcirc_{\proptype \typearrow \proptype \typearrow \proptype}$
  abbreviate the following formulas of HOL:
  \[
  \begin{array}{ll}
    \Box_{\proptype \typearrow \proptype } & = \lambda \phi_\proptype.  \lambda x_\itype. \forall y_{\itype}. (\phi\, y) \\
    \bigcirc_{\proptype \typearrow \proptype \typearrow \proptype} &= \lambda \psi_\proptype.  \lambda \phi_\proptype.  \lambda x_\itype. \forall w_\itype. ( (\lambda v_\itype. ( \phi \, v \wedge (\forall y_\itype. (\phi \, y \rightarrow  R_{\itype \typearrow \proptype } v \, y)) ))  \, w     \rightarrow \psi \, w ) 
  \end{array}
  \]

\end{definition}
The basic idea is to make the modal logic's possible worlds structure
explicit by introducing a distinguished predicate symbol $R$ to
represent the preference or betterness relation $\succeq$, and to
translate a formula directly according to its semantics.  For instance
$\bigcirc (\psi/\phi)$ translates into
\[
\begin{array}{ll}
  & \lambda x_i. ( ( \forall w_i. ( \phi w \wedge (\forall y_i. (\phi y \rightarrow R w y) )  \rightarrow \psi w) )
\end{array}
\]

\begin{definition}[Validity of an embedded formula] \label{val} Global
  validity ($vld$) of an embedded formula $\phi$ of {\bf E} in HOL is
  defined by the equation
  $$\text{vld}\, \lfloor \phi \rfloor=\forall z_i. \lfloor \phi
  \rfloor z$$
\end{definition}
For example, checking the global validity of $\bigcirc (\psi/\phi)$ in
{\bf E} is hence reduced to checking the validity of the formula \[
\begin{array}{ll}
  & \forall z_i. ( \phi z \wedge (\forall y_i. (\phi y \rightarrow R z y) )  \rightarrow \psi z )
\end{array}
\]
in HOL.

This definition is hidden from the user, who can construct now deontic
logic formulas involving $\bigcirc (\psi/\phi) $ and use them to
represent and prove theorems.

\subsection{Faithfulness of the embedding}

It can be shown that the embedding is faithful, in the sense given by
Thm.~\ref{faith}. Remember that the establishment of such a result is
our main success criterium at layer L1 in the \logikey\ methodology.  Intuitively, Thm.~\ref{faith} says that a
formula $\phi$ in the language of ${\bf E}$ is valid in the class of
all preference models if and only if its translation
$\lfloor \phi \rfloor$ in the language of HOL is valid in the class of
Henkin models in the sense of Def.~\ref{val}.

\begin{theorem}[Faithfulness of the embedding]\label{faith}
  \[\models^{\mathcal{P}} \phi \text{ if and only if }
  \models^\text{HOL} \text{vld}\, \lfloor \phi \rfloor\]
\end{theorem}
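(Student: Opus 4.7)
My plan is to prove the biconditional by establishing a tight semantic correspondence between preference models and Henkin models of the signature, and then transporting satisfaction along this correspondence via a \emph{truth lemma} proved by induction on the structure of $\phi$.

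First I would fix the two directions of the model correspondence. Given a preference model $M = (W, \succeq, V)$, I build a Henkin model $H_M$ by setting $D_{\itype} := W$ (so individuals are worlds), $D_o := \{T,F\}$, taking the higher function domains to be the standard full function spaces (which yields a Henkin model a fortiori), interpreting $R_{\itype \typearrow \proptype}$ as the characteristic predicate of $\succeq$, and interpreting each propositional constant $P^j_\proptype$ as the characteristic predicate of $V(p^j)$. Conversely, given any Henkin model $H = \langle \{D_\alpha\}_\alpha, I\rangle$, I define a preference model $M_H$ with universe $W := D_{\itype}$, with $\succeq$ the binary relation on $W$ induced by $I(R_{\itype\typearrow\proptype})$, and with valuation $V(p^j) := \{w \in W \mid I(P^j_\proptype)\, w = T\}$. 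No properties of $\succeq$ are required on either side, matching the fact that \textbf{E} imposes none.

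The truth lemma states: for every preference model $M$, every world $s \in W$, and every assignment $g$ with $g(z_\itype) = s$,
\[
M, s \models \phi \quad\Longleftrightarrow\quad V_{H_M}(g, \lfloor \phi \rfloor\, z_\itype) = T,
\]
and symmetrically with $M$ and $H$ swapped using $M_H$. The Boolean cases are immediate from Theorem~\ref{prop:trivial}. The $\Box$-case reduces, after $\beta\eta$-normalization, to $\forall y_\itype.\, \lfloor \phi \rfloor\, y$, which by the induction hypothesis holds iff $\phi$ holds at every world. The dyadic deontic case is the only slightly delicate one: unfolding Definition~\ref{trans}, the term $\lfloor \bigcirc(\psi/\phi)\rfloor\, z$ $\beta\eta$-reduces to
\[
\forall w_\itype.\,\bigl(\lfloor\phi\rfloor\, w \wedge \forall y_\itype.(\lfloor\phi\rfloor\, y \rightarrow R\, w\, y)\bigr) \rightarrow \lfloor\psi\rfloor\, w,
\]
and by the induction hypothesis this asserts precisely that every $\phi$-world which is $\succeq$-above all $\phi$-worlds is a $\psi$-world, i.e.\ $\op(\|\phi\|) \subseteq \|\psi\|$.

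With the truth lemma in hand, both directions of the theorem follow from Definition~\ref{val} by contraposition. For the ($\Rightarrow$) direction, if $\phi$ fails at some world $s$ of some preference model $M$, then applying the truth lemma to $H_M$ with an assignment $g$ satisfying $g(z_\itype) = s$ yields $V_{H_M}(g, \lfloor\phi\rfloor\, z) = F$, so $\text{vld}\,\lfloor\phi\rfloor = \forall z.\,\lfloor\phi\rfloor\, z$ fails in $H_M$. Conversely, if $\text{vld}\,\lfloor\phi\rfloor$ fails in some Henkin model $H$, there exists $s \in D_\itype$ with $V(g[s/z],\lfloor\phi\rfloor\, z) = F$, and the truth lemma applied to $M_H$ gives $M_H, s \not\models \phi$. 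The main obstacle I anticipate is bookkeeping in the dyadic case: one must carefully unfold the nested $\lambda$-abstractions inside $\bigcirc_{\proptype\typearrow\proptype\typearrow\proptype}$, invoke Theorem~\ref{prop:trivial} to track the induced substitutions on the semantic side, and verify that the induction hypothesis is applied under the right modified assignments. Everything else, including the lift from the pointwise truth lemma to global validity via the outer universal quantifier in $\text{vld}$, is routine once the model correspondence is fixed and one notes that no side conditions on $\succeq$ need to be tracked.
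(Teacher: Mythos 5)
Your proposal is correct and takes essentially the same route as the paper, whose proof is deferred to Benzm\"uller et al.~\cite{J45} with the remark that the crux consists in relating preference models with Henkin models in a truth-preserving way---precisely the two-way model construction plus truth lemma by induction on $\phi$ that you carry out. One cosmetic slip: your two concluding cases are labelled backwards (refuting $\text{vld}\,\lfloor\phi\rfloor$ in $H_M$ from a failure of $\phi$ in $M$ is the contrapositive of the right-to-left implication), but both directions are in fact established.
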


\begin{proof} The proof can be found in Benzm{\" u}ller {\em et al.}
  \cite{J45}. The crux of the argument consists in relating preference
  models with Henkin models in a truth-preserving way.
\end{proof}

\subsection{Encoding in Isabelle/HOL}

The practical employment of the above SSE for {\bf E} in \isabellehol\
is straightforward and can be done in a separate theory file. This
way, for a concrete application scenario, we can simply import the
embedding without dealing with any technical details. The complete
embedding is quite short (approx.~30 lines of code with line breaks)
and is displayed in Fig.~\ref{fig:e}.
\begin{figure}[tp!] \centering
  \includegraphics[width=\textwidth]{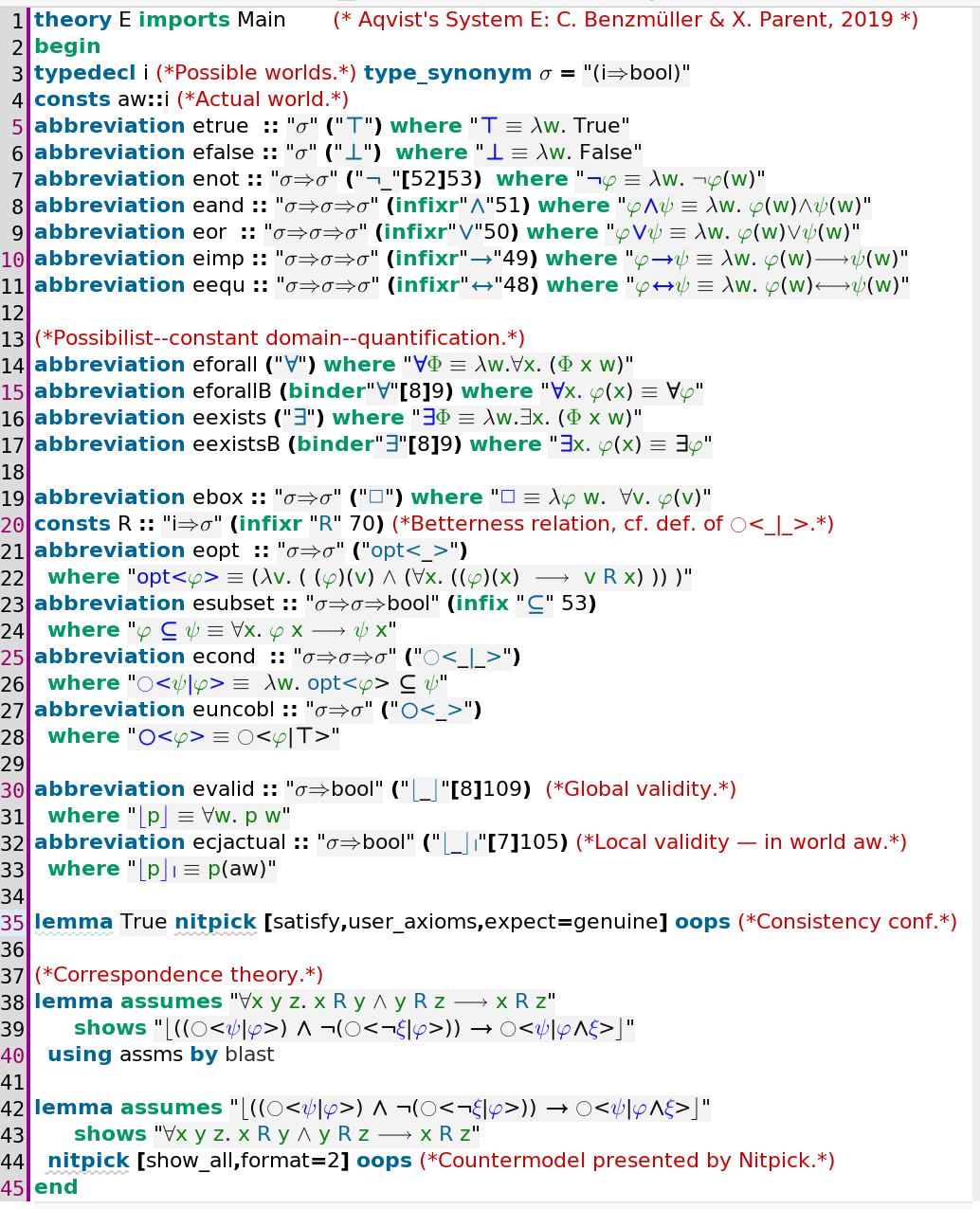}
  \caption{Embedding of the semantics of system {\bf E} in
    \isabellehol. \label{fig:e} }
\end{figure}

The embedding has been extended to include quantifiers as well. The
possibilist quantifiers (cf.~Sect.~\ref{sec:SSEapproach}) are
introduced in lines 13--17---this amounts to having a fixed domain of
individuals rather than a world-relative domain (the actualist
semantics). If needs be, actualist quantifiers can also be introduced
in a few lines of code; see the study~\cite{J41}, where analogous
definitions of both possibilist and actualist quantifiers are
presented for higher-order conditional logic. The betterness relation
$R$ is introduced as an uninterpreted constant symbol in line 20, and
the conditional obligation operator is defined in line 26. Its
definition mirrors the one given in Def.~\ref{trans}. 
The unconditional obligation operator is introduced in line 27. It is
defined in terms of its dyadic counterpart in the usual way. Last,
global and local validity (cf. Sect.~\ref{sec:SSEapproach}) are
introduced in lines 30--33.  Lines 35--40 show sample queries.  On
line 35, consistency of the embedding is confirmed. Lines 37--40
illustrate how \isabellehol\ can be used as a heuristic tool in
correspondence theory. The focus is on the assumption of transitivity
of the betterness relation. One would like to know what its
syntactical counterpart is. \isabellehol\ confirms that such an
assumption has the effect of validating the axiom Lewis \cite{ddl:L73}
called CV (line 39). \isabellehol\ also confirms that transitivity is
not equivalent to CV: the model finder Nitpick \cite{Nitpick}
integrated with \isabellehol\ finds a model validating CV in which the
betterness relation is not transitive (lines 42--44).

\section{\logikey\ tool support} \label{sec:tools}

\subsection{Support for different reasoning tasks} \label{sec:tasks}
In a nutshell, a reasoner is a tool that can perform reasoning tasks
in a given application domain.  Reasoning thereby refers to the
process of deriving or concluding information that is not explicitly
encoded in the knowledge base. 
Which information is derivable and
which is not is thereby dependent on the particular choice of logic.
The reasoning tasks that are particularly relevant in our context, for
example, include:
\begin{itemize}
\item \textit{Compliance checking}: Is the current situation,
  respectively,~an intended action by an IAS in a given context,
  compliant with a given regulation (a set of formally represented
  norms)?
\item \textit{Non-compliance analysis}: If non-compliance is the
  result of a compliance check, can the reasons be revealed and
  explained?
\item \textit{Entailment checking}: Does such-and-such obligation or
  legal interpretation follow from a given regulation?
\item \textit{Non-entailment analysis}: If entailment checking fails,
  can the reasons be revealed and explained?
\item \textit{Consistency checking}: Is a given regulation consistent?
  Is such-and-such norm, as part of a given regulation, consistent
  with this other set of norms, stemming from another regulation? Is
  such-and-such legal interpretation consistent with another one?
\item \textit{Inconsistency analysis}: If consistency checking fails,
  can a minimal set of conflicting norms be revealed and the
  inconsistency be explained?
\end{itemize}

We take consistency in its usual sense in classical logic. A set of
formulas is consistent, if there exists a model that satisfies all of
its members, and inconsistent otherwise. Thus, $\{p,q\}$ is
consistent, but $\{p,\neg p\}$ is not. Likewise,
$\{\bigcirc p, \bigcirc \neg p\}$ is consistent in a system of deontic
logic accommodating the existence of conflicts, but
$\{\bigcirc p, \neg{\bigcirc p}\}$ is not.

Consistency checking, non-compliance analysis and non-entailment
analysis are well supported by model finders, respectively
counter-model finders, while the other tasks generally require theorem
proving technology. A powerful deontic reasoner should thus ideally
provide both (counter-)model finding and theorem proving. Moreover,
intuitive proof objects and adequate presentations of (counter-)models
are desirable to enable user explanations.

\subsection{Flexibility along different axes}
While the above reasoning tasks are comparably easy to provide for
many decidable propositional fragments of deontic logics, it becomes
much less so for their quantified extensions. We are not aware of any
system, besides the work presented in this article, that still meets
all these requirements; cf.~the related work in
Sect.~\ref{sec:RelatedWork}.

Our approach addresses challenges along different axes, which are
motivated by the following observations:
\begin{description}
\item{\textit{Different logics:}} The quest for ``a most suitable
  deontic logic'' is still open, and will eventually remain so for
  quite some time. Therefore, a user should be able to choose the logic that suits his or her needs best. Our platform is flexible, and can ``host'' a wide range of deontic logics (cf.~Sect.~\ref{sec:3.2}).
  
\item{\textit{Expressivity levels:}} It is highly questionable whether
  normative knowledge and regulatory texts can often be abstracted and
  simplified to the point that pure propositional logic encodings are
  feasible and justified. For example, it can be doubted that the
  encoding of Gewirth's \emph{``Principle of Generic Consistency"},
  which we outline in Sect.~\ref{sec:PGC}, can be mechanized also in a
  propositional setting without trivialising it to a point of vacuity.
  The need for quantified deontic logics is also evidenced by related
  work such as Govindarajulu and Bringsjord's encoding of the
  \emph{``Doctrine of Double Effect"} \cite{Govindarajulu2017}.
\item{\textit{Logic combinations:}} In concrete applications normative
  aspects often meet with other challenges that can be addressed by
  suitable logic extensions, respectively, by combining a deontic
  logic with other suitable logics as required. An example is provided
  again by the encoding of Gewirth's ethical theory as outlined in
  Sect.~\ref{sec:PGC}.
\end{description}

These issues should be addressed, utilising the \logikey\ methodology,
in empirical studies in which the different choices of logics,
different expressivity levels and different logic combinations are
systematically compared and assessed within well selected application
studies. However, for such empirical work to be feasible,
implementations of the different deontic candidate logics, and its
combinations with other logics, have to be provided first, both on the
propositional level and ideally also on the first-order and
higher-order level. Moreover, it is reasonable to ensure that these
implementations remain comparable regarding the technological
foundations they are based on, since this may improve the fairness and
the significance of conceptual empirical evaluations.

Figure \ref{fig:infrastructure} well illustrates the different
components and aspects that we consider relevant in our
work. Different target logics (grey circles) and their combinations
are provided in the higher-order meta-logic of the host system. In our
case this one is either the proof assistant \isabellehol\ or the
higher-order ATP system \leoIII. In provided target logics different
ethico-legal theories can then be encoded. Concrete examples are
given in Sect.~\ref{sec:case-studies}. This set-up enables a two-way
fertilization. On the one hand, the studied theories are themselves
assessed, e.g.,~for consistency, for entailed knowledge, etc. On the
other hand the properties of the different target logics are
investigated. For example, while one of the target logics might suffer
from paradoxes in a concrete application context, another target logic
might well be sufficiently stable against paradoxes in the same
application context. An illustration of this aspect is given in
Sect.~\ref{sec:GDPR}. After arriving at consistent formalizations of
an ethico-legal theory in a suitable logic or suitable combination
of logics, empirical studies can be
conducted.\footnote{\label{foot:LeoPARD} To this end the agent-based
  LeoPARD framework~\cite{LeoPARD,C56}, which is underlying the prover
  \leoIII~\cite{Leo-III}, can be adapted and utilized, e.g.,~to embody upper ethical theories in virtual
  agents and to conduct empirical studies about the agents behaviour
  when reasoning with such upper principles in a simulated
  environment. The prover \leoIII\ is itself implemented as a
  proactive agent in the LeoPARD framework, which internally utilizes
  another copy/instance of the LeoPARD framework for the organization
  and orchestration of its distributed reasoning approach, where it
  collaborates, e.g., with other theorem provers that are modelled as
  proactive agents. In other words, \leoIII\ already \emph{is} a
  pro-active agent within an agent-based framework, and it already
  comes with deontic logic reasoning support.  The idea is now to
  populate multiple instances of such pro-active agents with selected
  normative theories within the LeoPARD framework, to initialize these
  agent societies with carefully selected tasks, and to subsequently
  study their interaction behaviour within a controlled
  environment.}

\section{\logikey\ case studies} \label{sec:case-studies}

In this section we shift our attention to the encoding of ethico-legal
theories at \logikey\ layer L2. We outline and selectively discuss two
respective case studies.

\subsection{Data protection} \label{sec:GDPR}

In our first case study the legal theory of interest is the General
Data Protection Regulation (GDPR, Regulation EU 2016/679).  It is a
regulation by which the European Parliament, the Council of the
European Union and the European Commission intend to strengthen and
unify data protection for all individuals within the European Union.
The regulation became enforceable from 25 May 2018.

First-order logic has been identified so far as a good level of
abstraction (in the case study in \ref{sec:PGC}, in contrast,
higher-order logic is used). Given respective first-order extensions
of deontic logics, the interest then is to practically assess their
correctness and reasoning performance in context, and to compare the
outcomes of such tests with our expectations. Below we illustrate how
such practical assessments can be conducted within the interactive
user-interface of \isabellehol. As an illustrating example we present
a concrete CTD structure that we revealed in the context of the GDPR;
for readers not yet familiar with CTD structures this example may also
be useful from an educational viewpoint.

The proper representation of CTD structures is a well-known problem in
the study of deontic logics.  CTD structures refer to situations in
which there is a primary obligation and, additionally, what we might
call a secondary, contrary-to-duty obligation, which comes into effect
when the primary obligation is violated.\footnote{The problem was
  first pointed out by Chisholm \cite{c63} in relation with SDL.} The
paradox arises when we try to symbolize certain intuitively consistent
sets of ordinary language sentences, sets that include at least one
CTD obligation sentence, by means of ordinary counterparts available
in various monadic deontic logics, such as SDL and similar
systems. The formal representations often turn out to be inconsistent,
in the sense that it is possible to deduce contradictions from them,
or else they might violate some other intuitively plausible condition,
for example that the sentences in the formalization should be
independent of each other. It is not the purpose of this article to
discuss in any greater depth the paradox. The interested reader should
consult, e.g., Carmo and Jones~\cite{Carmo2002}.

Here is the CTD structure we revealed in the context of the GDPR:
\vskip.7em
\noindent\fcolorbox{black}{gray!05}{\begin{minipage}{.975\textwidth}
\begin{minipage}{.95\textwidth}
\begin{enumerate}
\item \label{l1} Personal data shall be processed  lawfully (Art. 5). For example, the data subject must have given consent to the processing of his or her personal data for one or more specific purposes (Art.~6/1.a). 
\item \label{l2}  If the personal data have been processed unlawfully (none of the requirements for a lawful processing applies), the controller has the obligation to erase the personal data in question without delay (Art.~17.d, right to be forgotten).
\end{enumerate}
\end{minipage}
\end{minipage}} \vskip.7em When combined with the following a typical
CTD-structure is exhibited.  \vskip.7em
\noindent
\fcolorbox{black}{gray!05}{\begin{minipage}{.975\textwidth}
    \begin{minipage}{.95\textwidth}
      \begin{enumerate} \setcounter{enumi}{2}
      \item \label{aa} It is obligatory, e.g.,~as part of a respective
        agreement between a customer and a company, to keep the
        personal data (as relevant to the agreement) provided that it
        is processed lawfully.
      \item \label{bb} Some data in the context of such an agreement
        has been processed unlawfully.
      \end{enumerate}
    \end{minipage}
  \end{minipage}} \vskip.7em The latter information pieces are not
part of the GDPR.  \eqref{aa}~is a norm coming from another
regulation, with which the GDPR has to co-exists. \eqref{bb}~is a
factual information---it is exactly the kind of world situations the
GDPR wants to regulate.

\begin{figure}[t] 
  \includegraphics[width=\textwidth]{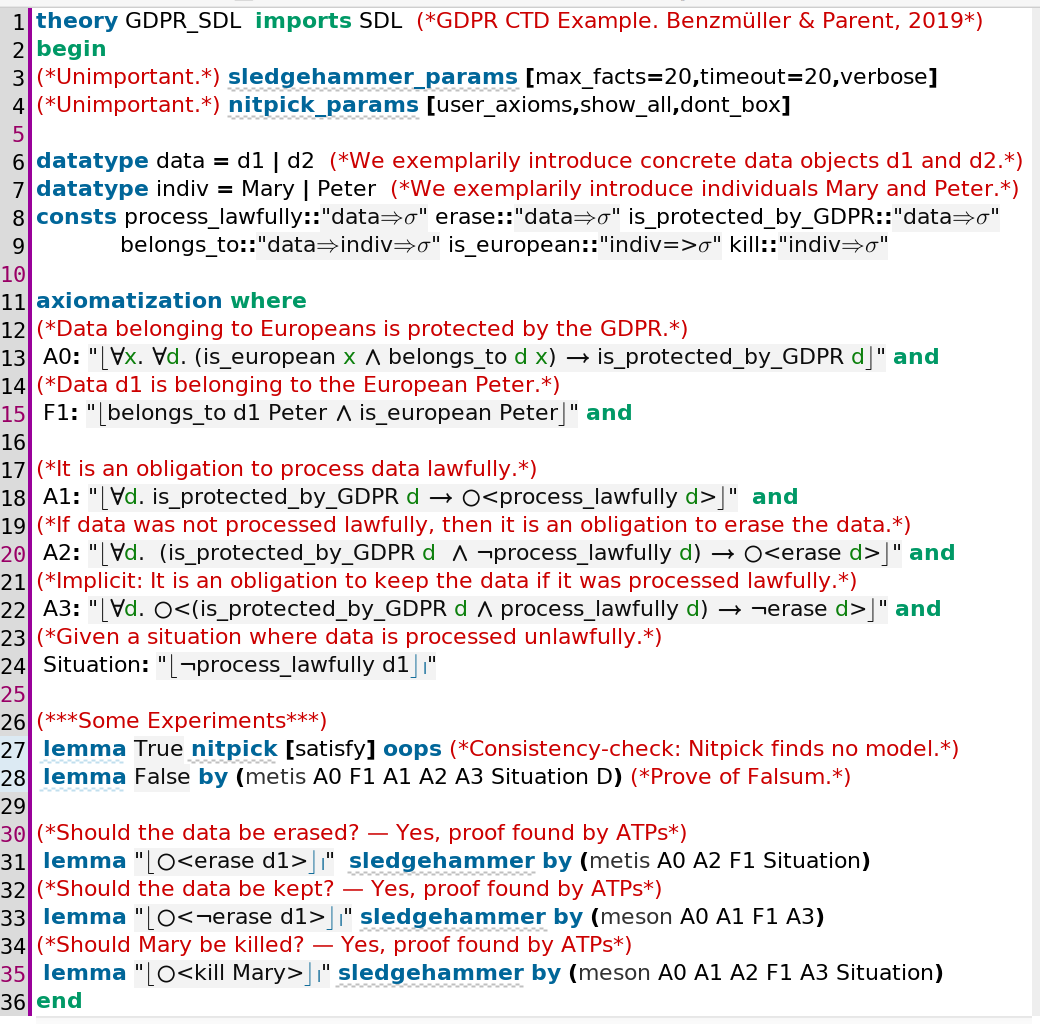}
  \caption{Failed analysis of the GDPR example in SDL. \label{GDPRinSDL}}
\end{figure}

Figure~\ref{GDPRinSDL} illustrates the problem raised by CTD
scenarios, when the inference engine is based on SDL. The knowledge
base is encoded in lines 7--25.  The relevant obligations and the
assumed situation as described by~\eqref{l1}--\eqref{bb} above are
formalized in lines 18--25.  Subsequently, three different kinds of
queries are answered by the reasoning tools integrated with
\isabellehol. The first query asks whether the encoded knowledge base
is consistent, and Nitpick answers negatively to this question in line
28. The failed attempt to compute a model is highlighted in pink. The
second query asks whether falsum is derivable, and \isabellehol's prover
\texttt{metis} returns a proof within a few milliseconds in line
29. Notice that the proof depends on the seriality axiom \texttt{D} of
SDL, which is imported from SDL's encoding in file
\texttt{SDL.thy}--that is not shown here. The query in line 36 asks
whether an arbitrarily weird and unethical conclusion such as the
obligation to kill Mary follows, and the prover answers positively to
this query.

These results are clearly not desirable, and confirm the need to use a
logic other than SDL for application scenarios in which norm violation
play a key role.

Fig.~\ref{gdpr:e} shows that our SSE based implementation of system
{\bf E} is in contrast not suffering from this effect.  The prescriptive rules of the GDPR scenario are modelled in lines 17--22:

\begin{minipage}{.975\textwidth}\centering
  \begin{lstlisting}[basicstyle=\small\ttfamily,mathescape]
    A$_1$: $\forall$d $\bigcirc(\mathsf{process}\_\mathsf{lawfully}\;\mathsf{d}/\mathsf{is}\_\mathsf{protected}\_\mathsf{by}\_\mathsf{GDPR}\;\mathsf{d}) $
    A$_2$: $\forall$d $\bigcirc(\mathsf{erase}\;\mathsf{d} /\mathsf{is}\_\mathsf{protected}\_\mathsf{by}\_\mathsf{GDPR}\;\mathsf{d}\wedge \neg \mathsf{process}\_\mathsf{lawfully}\;\mathsf{d} ) $
    A$_3$: $\forall$d $\bigcirc ( \neg \mathsf{erase}\;\mathsf{d}
    /\mathsf{is}\_\mathsf{protected}\_\mathsf{by}\_\mathsf{GDPR}\;\mathsf{d}\wedge
    \mathsf{process}\_\mathsf{lawfully}\;\mathsf{d} ) $
  \end{lstlisting}
\end{minipage}

\begin{figure}[t]
  \includegraphics[width=\textwidth]{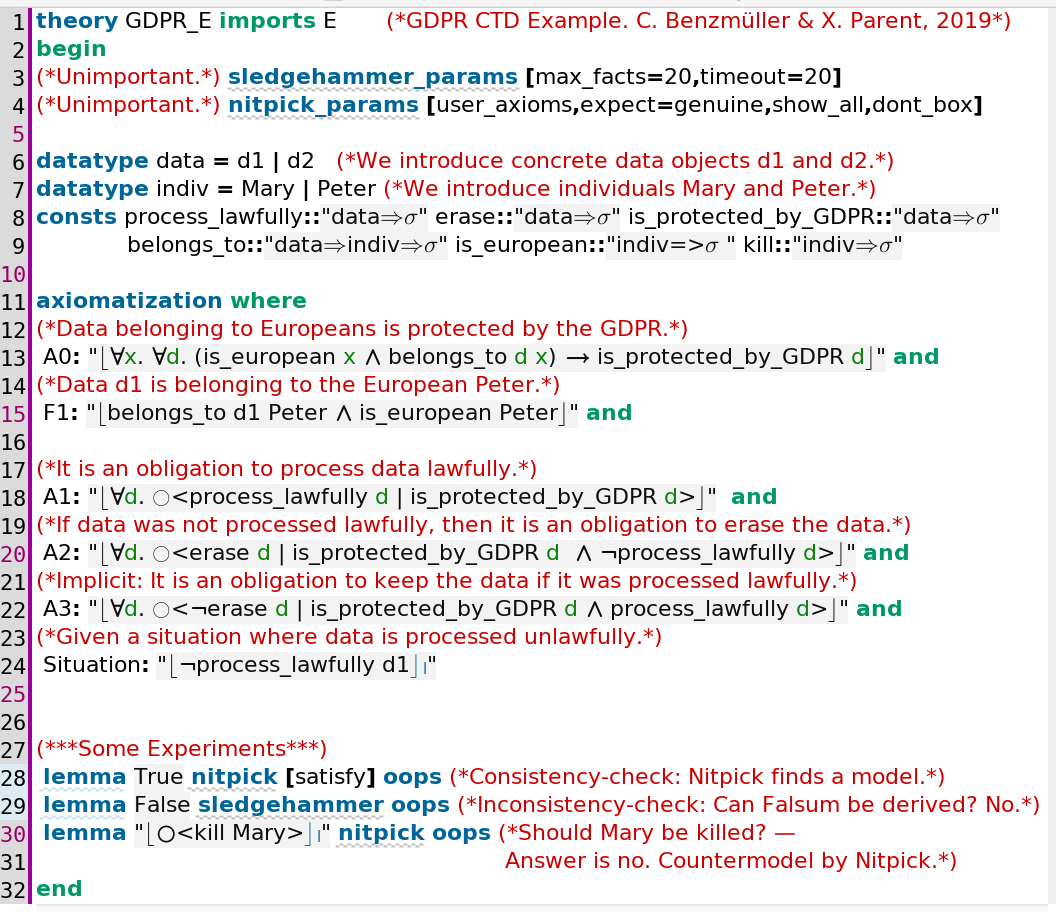}
  \caption{Successful analysis of the GDPR example scenario in system
    $\mathbf{E}$.} \label{gdpr:e}
\end{figure} 

\noindent The current situation, in which we have that Peter's
personal data $\mathsf{d1}$ are not processed lawfully, is defined in
line 24:

\begin{minipage}{.975\textwidth}\centering
  \begin{lstlisting}[basicstyle=\small\ttfamily,mathescape]
    Situation:  $\neg \mathsf{process}\_\mathsf{lawfully}\;\mathsf{d1} $
  \end{lstlisting}
\end{minipage}
The three same queries as before are run, but this time with
success. In line 28 we are told by Nitpick that the knowledge base has
a model. The computed model can be inspected in full detail a separate
window. In line 29 we are told that falsum is no longer derivable and
the theorem provers integrated with sledghammer terminate with a
time-out. In line 30 we are told that the obligation to kill Mary no
longer follows.

These practical experiments with theorem provers and model finders
demonstrate that a machine can indeed reason about norm violation in a
first-order extension of DDL at run-time, resp.~on the fly.  Future
research will study the practical deployment of the \logikey\ flexible
reasoning technology, for example, for ethico-legal governance of
autonomous cars, smart homes, social robots, etc.  Given these
ambitious goals, it is relevant to confirm \emph{early-on} that the
\logikey\ solution, without further ado, can already handle simple
examples as discussed here.\footnote{\label{footnote:stit}For example,
  Meder, in his MSc thesis~\cite{MederMasters}, worked out an SSE for
  a variant of STIT-logic put forth by Lorini \cite{lori2013}, called
  T-STIT, for temporal STIT. It was found out that model and
  (counter-)model finding wasn't responsive for analogous
  experiments. The reason, however, was quickly revealed: the
  T-STIT-logic, by nature, requires infinite models, while the
  (counter-)model finding tools in \isabellehol\ so far explore finite
  model structures only.}  This, of course, provides only a relevant
first assessment step.  The development of respective libraries of
increasingly complex benchmarks, see also Sect.~\ref{sec:PGC}, is
therefore an important objective in our future work.

\subsection{Gewirth's Principle of Generic
  Consistency} \label{sec:PGC}

\begin{figure}[tp]
  \centering \fcolorbox{black}{gray!05}{
    \begin{minipage}{.97\textwidth}
      Alan Gewirth's \emph{``Principle of Generic Consistency} (PGC)
      \cite{GewirthRM,Beyleveld}, constitutes, loosely speaking, an
      emendation of the \emph{Golden Rule}, i.e., the principle of
      treating others as one's self would wish to be treated. Adopting
      an agent perspective, the PGC expresses and deductively
      justifies a related upper moral principle, according to which
      any intelligent agent, by virtue of its self-understanding as an
      agent, is rationally committed to asserting that it has rights
      to freedom and well-being, and that all other agents have those
      same rights. The main steps of the argument are
      (cf.~\cite{Beyleveld} and~\cite{C76}):
      \begin{description}[itemsep=0pt,leftmargin=10pt]
      \item[(1, premise)] I act voluntarily for some purpose E, i.e.,
        I am a prospective purposive agent (PPA).
      \item[(2, derived)] E is (subjectively) good (i.e.~I value E
        proactively).
      \item[(3, derived)] My freedom and well-being (FWB) are
        generically necessary conditions of my agency (I need them to
        achieve any purpose whatsoever).
      \item[(4, derived)] My FWB are necessary goods (at least for
        me).
      \item[(5, derived)] I have (maybe nobody else) a claim right to
        my FWB.
      \item[(13, final conclusion)] Every PPA has a claim right to
        their FWB.
      \end{description}
      Formalization of the argument is challenging; it faces complex
      linguistic expressions such as alethic and deontic modalities,
      quantification and indexicals.  The solution of Fuenmayor and
      Benzm\"uller \cite{C76,C77}, cf.~Fig.~\ref{fig:GewirthProof2}
      for an excerpt, is based on the following modeling decisions:
      \texttt{FWB} and \texttt{Good} are introduced as unary
      uninterpreted predicate symbols; further uninterpreted relation
      symbols are added: \texttt{ActsOnPurpose},
      \texttt{InterferesWith} (both 2-ary) and
      \texttt{NeedsForPurpose} (3-ary).  $\texttt{PPA}\, a$ is defined
      as $\exists E.\, \texttt{ActsOnPurpose}\, a\, E$; an additional
      axiom postulates that being a \texttt{PPA} is
      identity-constitutive for any individual:
      $\lfloor \forall a.  \texttt{PPA} a \rightarrow {\Box}^D
      (\texttt{PPA} a)\rfloor^D$.
      $\lfloor \varphi\rfloor^D$ in there models indexical validity of
      $\varphi$; it is defined following Kaplan's logic of
      demonstratives \cite{Kaplan1979} as being true in all contexts.
      $\texttt{RightTo}\, a \, \varphi$ is defined as
      $\textbf{O}_i(\forall b. \neg \texttt{InterferesWith}\, b\,
      (\varphi\ a))$;
      this captures that an individual $a$ has a (claim) right to some
      property $\varphi$ iff it is obligatory that every (other)
      individual $b$ does not interfere with the state of affairs
      $(\varphi\, a)$.  $\textbf{O}_i$ is defined as the \emph{ideal}
      obligation operator from Carmo and Jones \cite{Carmo2002} (their
      actual obligation operator $\textbf{O}_a$ could be used as
      well).

      The meaning of the uninterpreted constant symbols is constrained
      by adding further axioms: e.g., axioms that interrelate the
      concept of goodness with the concept of agency, or an axiom
      $\lfloor\forall P. \forall a.\, \texttt{NeedsForPurpose}\, a\,
      \texttt{FWB}\, P\rfloor^D$
      that expresses that \texttt{FWB} is always required in order to
      act on any purpose, whereas \texttt{FWB} is postulated to be a
      contingent property
      ($\lfloor\forall a. \Diamond_p\, \texttt{FWB}\, a\ \wedge
      \Diamond_p\, \neg \texttt{FWB}\, a\rfloor^D$).

      Note that both first-order and higher-order quantifiers are
      required; cf.~the ``$\forall P. \forall a.$''-prefix in the
      previous axiom, where $P$ ranges over properties and $a$ over
      individuals.
    \end{minipage}
  }
  \caption{Gewirth's \emph{Priciple of Generic Conistency}, and its
    formal encoding \cite{C76}, in a nutshell.}
  \label{fig:GewirthArgument}
\end{figure}

In a second case study our framework has been adapted and utilized for
the exemplary mechanization and assessment \cite{C76,C77} of Alan
Gewirth's \emph{Principle of Generic Consistency} (PGC)
\cite{GewirthRM,Beyleveld} on the computer; the PGC is an ethical
theory that can be understood as an emendation of the well known
golden rule.  The formalization of the PGC, which is summarized below,
demonstrates that intuitive encodings of ambitious ethical theories
and their mechanization, resp.~automation, on the computer are no
longer antipodes.

Some further information on the PGC is given in
Fig.~\ref{fig:GewirthArgument}, including a depiction of the modeling
decisions taken for its formalization.
The self-contained \isabellehol\ sources of the formalization are
available in the Archive of Formal Proofs \cite{GewirthProofAFP},\footnote{See also the \emph{Data in brief} article \cite{J53} associated with this article.} and
an excerpt of the derivation and verification of the main inference
steps of the argument is presented in Fig.~\ref{fig:GewirthProof2}.
\begin{figure}[ht!] \centering
  \includegraphics[width=\textwidth]{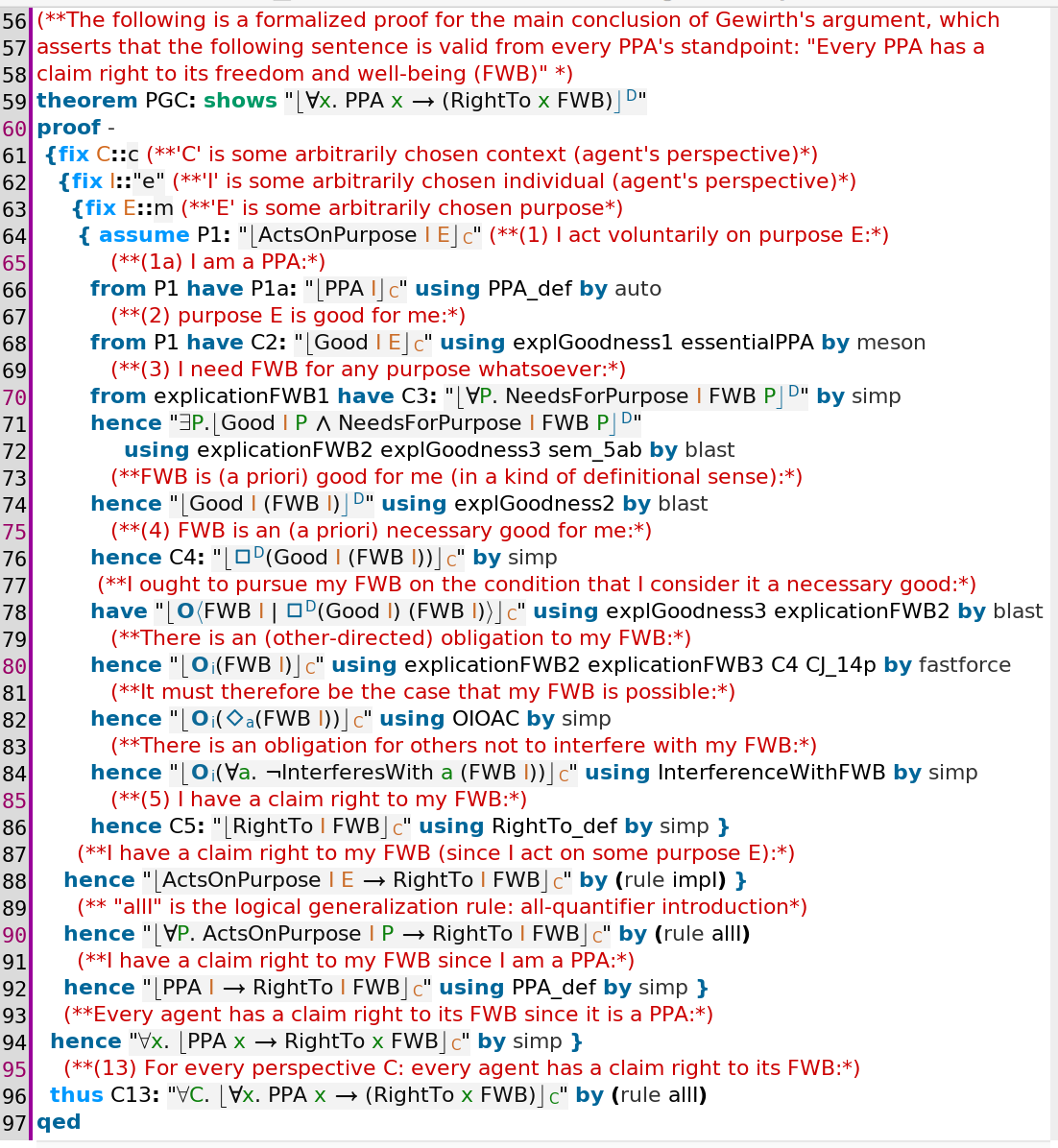}
  \caption{Mechanization of Gewirth's argument in \isabellehol,
    cf.~\cite{C76}. \label{fig:GewirthProof2}}
\end{figure}

The argument used by Gewirth to derive the PGC is by no means trivial
and has stirred much controversy in legal and moral philosophy during
the last decades. It has also been discussed in political philosophy
as an argument for the \textit{a priori} necessity of human
rights. Perhaps more relevant for us, the PGC has lately been proposed
as a means to bound the impact of artificial general intelligence
\cite{Kornai}.

Gewirth's PGC is taken here as an illustrative showcase to exemplarily
assess its logical validity with our normative reasoning machinery,
and the success of Fuenmayor's and Benzm\"uller's work \cite{C76,C77}
provides evidence for the claims made in this article.  It is the
first time that the PGC has been formalized and assessed on the
computer at such a level of detail, i.e.,~without trivializing it by
abstraction means, and while upholding intuitive formula
representations and user-interaction means. As a side-effect, the ATP
systems integrated with \isabellehol\ have helped in this work to
reveal and fix some (minor) issues in Gewirth's argument.

To encode the PGC an extended embedding of the DDL by Carmo and Jones in HOL was employed.  Conditional, ideal and actual
obligation has been
combined, 
among others, with further modalities, with Kaplan's notion of
context-dependent logical validity, and with both first-order and
higher-order quantifiers.\footnote{Kaplan's context-sensitive logic of
  demonstratives covers the dimension of agency, among others. Future
  work includes the study of alternative modelings, utilising,
  e.g.,~the modal logics of agency or STIT logic. STIT logic, however,
  gives rise to interesting problems, including infinite models as
  already noted. The infinite character is fixed in the axioms.} %

What can be seen in Fig.~\ref{fig:GewirthProof2} is that readable
formula presentations are supported, which in turn enable quite
intuitive user-interactions in combination with proof automation
means. It is demonstrated that the interactive assessment of Gewirth's
argument is supported at an adequate level of granularity: the proof
steps as presented in Fig.~\ref{fig:GewirthProof2} correspond directly
to actual steps in Gewirth's argument and further details in the
verification process can be delegated to automated reasoning tools
provided in \isabellehol.

Proof automation can be pushed further, but this has not been the
focus so far in this work. The result of this work is a formally
verified derivation of the statement \emph{``Every prospective
  purposive agent has a claim right to their free will and
  well-being''} from Gewirth's elementary premises, axioms and
definitions in the theorem prover \isabellehol\ (at \logikey\ layer L2,
cf.~Fig.~\ref{fig:metho}). The formalization of this ethical theory in
combination with the mechanization and automation of its underlying
non-trivial, logic combination (at \logikey\ layer L1) enables further
experiments with the theory and provides an entry for devising
applications of it (at \logikey\ layer L3). One possibility in this
regard is the population of this theory within virtual agents, so that
the behaviour of each agent is ultimately constraint, if no other
regulatory rules apply, by this very general moral principle;
cf.~Footnote~\ref{foot:LeoPARD}.

\section{Logic \& knowledge design and engineering methodology}
\label{sec:logikey}

Now that we have explained and illustrated various ingredients and
aspects of \logikey, we will, in this section, present and discuss the
\logikey\ logic \& knowledge design and engineering methodology
methodology in more detail. Remember that \logikey\ distinguishes
between the following three layers; they were visualized earlier in
Fig.~\ref{fig:metho}.

\begin{enumerate}[{L}1]
\item \emph{Logics and logic combinations.} Logics and logic
  combinations are modelled, automated and assessed at layer L1. The
  automation of CTD compliant normative reasoning is thereby of
  particular interest, where logical explosion is avoided even in
  situations when an intelligent autonomous system (IAS) accidentally
  or deliberately violates some of its obligations. We thus prefer a
  proper intra-logical handling of the CTD challenge in our work over
  tackling this challenge, e.g., with extra-logical and ad hoc
  means. Sect.~\ref{sec:deontic-logic} presented respective examples
  of CTD compliant logics---the dyadic deontic logic (DDL) of Carmo
  and Jones~\cite{Carmo2002} and \AA qvist's system
  \textbf{E}~\cite{ddl:A02,ddl:parent15}---and
  Sect.~\ref{sec:example-section} explained how system \textbf{E} is
  modelled and automated adopting the \logikey\ methodology.
\item \emph{Ethico-legal domain theories.}  At layer L2 concrete
 ethico-legal theories are modelled, automated and assessed ---
  exploiting the logic and logic combinations from layer L1. Question
  answering, consistency and compliance checking support is provided.
  Our particular interest is in ethico-legal  theories suited to
  govern the behaviour of IASs. Examples for layer L2 developments
  were presented in Sect.~\ref{sec:case-studies}, and tool support,
  for L2 and the other layers, was discussed in Sect.~\ref{sec:tools}.
\item \emph{Applications.}  Layer L3 addresses the deployment of the
  ethical and legal domain reasoning competencies, e.g., for regulating
  the behaviour of IASs.  This requires the implantation of L1- and
  L2-reasoning capabilities within a suitably designed governance
  component for the IAS to be regulated.
\end{enumerate}

Concrete engineering steps for layers L1, L2 and L3 are presented
below, and pointers to directly related work and illustrating examples
are provided. The \logikey\ methodology has been developed, refined
and tested in prior research projects and also in lecture courses at
Freie Universit\"at Berlin and at University of Luxembourg. We have
thus gained experience that \logikey\ well supports both research and
education~\cite{C58} at the depicted layers.

While each layer in the methodology can serve as an entry point, we
here start our systematic exhibition at layer L1. In concrete research
and education projects, one may alternatively start from layers L2 or
L3; the formalization of Gewirth's PGC, for example, started at layer
L2.  Entry at layers L3 or L2 will increasingly become the default,
when our library of reusable logic encodings at layer L1 grows, so
that extensive engineering work at layer L1 can largely be avoided in
future applications.

\subsection{Layer {L1} -- Logics and logic
  combinations} \label{sec:L1}

\paragraph{Logic: Select a logic or logic combination}
Logic examples include DDL or system \textbf{E},
cf.~Sect.~\ref{sec:deontic-logic}. The need for a non-trivial logic
combination was, e.g., identified in the case study on Gewirth's
PGC~\cite{C76}; cf.~Sect.~\ref{sec:PGC} and
Fig.~\ref{fig:GewirthArgument}. As explained in detail by Fuenmayor
and Benzm\"uller~\cite{C76}, in this work a higher-order extension of
DDL was combined with relevant parts of Kaplan's logic of
demonstratives (LD)~\cite{Kaplan1979,Kaplan1989}.  Moreover, a
combination of multi-epistemic logic with an operator for common
knowledge was used in an automation of the Wise Men
Puzzle~\cite{J41,J44}.

\paragraph{Semantics: Select a semantics for the chosen logic or logic
  combinations} This step is preparing the automation of the logic or
logic combination, using the SSE approach, in the next step.  Suitable
semantics definitions for DDL and system \textbf{E} are given in the
mentioned original articles. To arrive at a semantics for the logic
combination used in the PGC formalization, DDL's original semantics
was extended and combined, utilising a product construction \cite[\S
1.2.2]{LogicCombining}, with Kaplan's semantics for LD.  In the work
on the Wise Men Puzzle, a logic fusion~\cite[\S 1.2.1]{LogicCombining}
was used, and the semantics for multi-epistemic logic and common
knowledge were adopted from Sergot \cite{Sergot08}.
 
\paragraph{Automate: Automate the selected logics and logic
  combinations in HOL} In this step a selected logic or logic
combination is shallowly embedded by directly encoding its semantics
in meta-logic HOL. This is practically supported in theorem provers
and proof assistant systems for HOL, such as \isabellehol\
or \leoIII.  SSEs thus constitute our preferred
solution to the automation challenge, since they enable the reuse of
already existing reasoning machinery.\footnote{However, also deep
  logic embeddings in HOL can be utilized, and even combinations of
  both techniques are supported. The \emph{abstraction layers} used by
  Kirchner, Benzm\"uller and Zalta \cite{J47}, for example, support a
  deep embedding of a proof calculus for higher-order
  hyper-intensional (relational) logic S5 in HOL on top of an SSE for
  the same logic in HOL; both reasoning layers can then been exploited
  for proof automation and mutual verification; cf.~also Footnote
  \ref{foot:deep} and Sect.~\ref{sec:RelatedWork}.}
 
\paragraph{Assess: Empirically assess the SSEs with model finders and
  theorem provers} The consistency of an SSE can be verified with
model finders, and ATP systems can be employed to show that the
original axioms and proof rules of an embedded logic can be derived
from its semantics. Respective examples have been presented in the
literature~\cite{C77,J47}. If problems are revealed in this
assessment, modifications in previous steps maybe required.
 
\paragraph{Faithfulness: Prove the faithfulness of the SSEs} Part of
this challenge is computationally addressed already in the previous
step. In addition, explicit soundness and completeness proofs should
be provided using pen and paper methods.  An example is
Thm.~\ref{faith} in Sect.~\ref{sec:example-section}, which proves the
faithfulness of the SSE of system \textbf{E} in HOL. If faithfulness
cannot be established, modifications in the first three steps maybe
required.

\paragraph{Implications: Explore the implications of the embedded
  logics and logic combinations} What theorems are implied by an SSE?
Are the expectations matched?  For the SSE of the combination
higher-order DDL with Kaplan's LD in HOL it has, e.g., been
checked~\cite{C77}, whether model finders still report countermodels
to propositional and first-order variants of the Chisholm
paradox~\cite{c63}.  If the investigated implications do not match the
expectations, modifications in the first three steps maybe required.

\paragraph{Benchmarks: Test the logic automation against benchmarks}
Benchmark tests not only rank \logikey\ proof automations against
competitor approaches, they also provide further evidence for the
practical soundness and robustness of the implemented technology; this
is most relevant since pen and paper faithfulness proofs provide no
guarantee that an implementation is free of bugs.\footnote{Respective
  comparisons of SSE provers for first-order modal logics with
  competitor systems have been presented in the
  literature~\cite{C62,C34}. Since we are not aware of any
  implementation of first-order or higher-order DDLs besides our own
  work, we cannot yet conduct analogous evaluation studies for our
  SSE-based DDL provers.}

\paragraph{Contribute: Contribute to the built-up and extension of
  benchmark suites}

\subsection{Layer L2 -- Domain Theories} \label{sec:L2}

\paragraph{Select an ethico-legal domain theory of interest} An
example is Gewirth's PGC from~Sect.~\ref{sec:PGC}. Another example is
the German road traffic act, which we have started to work on in
student projects as part of our lecture courses. Alternatively, or in
addition, one might be interested in designing, from scratch, new sets
of ethical rules to govern the behaviour of autonomous cars.

\paragraph{Analyse the ethico-legal  domain theory} Mutually
related aspects are addressed in this step, and requirements for the
logics and logic combinations imported from layer L1 are identified.
\begin{enumerate}
        
\item \emph{Determine a suitable level of abstraction.}  Can relevant
  notions and concepts be suitably abstracted, e.g., to a purely
  propositional level (as often done in toy examples in AI), or would
  that cause an oversimplification regarding the intended applications
  at layer L3?
   
\item \emph{Identify basic notions and concepts.}  What are the most
  essential concepts addressed and utilized in a given domain theory.
  Which basic entities need to be explicitly referred to, and which
  properties, relations and functions for such entities must be
  explicitly modelled?  For example, notions identified as relevant
  for Gewirth's PGC, cf.~Fig.~\ref{fig:GewirthArgument}, include the
  relations \texttt{ActsOnPurpose}, \texttt{NeedsForPurpose} and
  \texttt{InterferesWith}, the predicates \texttt{Good} (for Goodness)
  and \texttt{FWB} (free will and well-being), and the defined terms
  \texttt{RightTo} (has right to) and \texttt{PPA} (is a prospective
  purposive agent). As a result of this analysis, a \emph{signature}
  is obtained together with a set of associated foundational axioms
  and definitions.
    
  In this step dependencies on very generic notions, such as
  mereological terms, may be revealed whose precise meanings are left
  implicit. For the formalization of such notions one may consult
  other sources, including existing upper ontologies.\footnote{Upper
    ontologies formally define very generic terms that are shared
    across several domains. Import from external upper ontologies
    requires some conceptual and logical fit which often is not
    given.}
\item \emph{Identify notions of quantification.}  Domain theories may
  contain universal and/or existential statements that cannot or
  should not be abstracted away. Careful assessment of the precise
  characteristics of each of the identified quantifiers is then
  advisable. In particular, when quantifiers interact with linguistic
  modalities, see below, the question arises whether, e.g., the Barcan
  formulas \cite{Barcan46} should hold or not. Different kinds of
  quantifiers may thus have to be provided at layer L1.
\item \emph{Identify linguistic modalities.}  Ethico-legal domain
  theories are challenged, in particular, by deontic modalities (e.g.,
  "an entity is \emph{permitted/obliged} to \ldots"), and they may
  occur in combination with other modalities. In Gewirth's PGC, for
  example, deontic and alethic modalities are relevant. Notions of
  time or space are further examples that frequently need to be
  addressed. Combinations of modalities may thus have to be provided
  at layer L1.
\end{enumerate}

\paragraph{Determine a suitable logic or logic combination}
The previous step identifies essential logical requirements for the
formalization task at hand. Based on these requirements a suitable
base logic or logic combination must be determined or devised and
imported from layer L1; if not yet provided, further work at layer L1
is required.

\paragraph{Formalize the ethico-legal  domain theory} During the
formalization process regular sanity checks, e.g., for in-/consistency
or logical entailment, with ATP systems and model finders are advisable,
cf.~Sect.~\ref{sec:tasks}. This serves two different purposes: early
detection of misconceptions and errors in the formalization process,
and early testing of the proof automation performance.  If the
reasoning performance is weak early on, then countermeasures maybe
taken, for example, by considering alternative choices in the previous
steps.\footnote{Remember Footnote~\ref{footnote:stit}, where
  respective issues for T-STIT-logic where identified by such
  experiments.}

\paragraph{Explore theory implications and check whether expectations are matched} 
If the computationally explored and assessed implications of the
formalized domain theory are not matching the expectations,
modifications in one of the previous steps are
required.\footnote{Respective experiments have been conducted, e.g.,
  in the context of the formalisation of G\"odel's ontological
  argument. In these experiments it has been confirmed with automated
  reasoning tools that monotheism and the modal collapse (expressing
  ``there are no contingent truths'', ``there is no free will'') were
  implied by G\"odel's theory; cf.~\cite{C55}. Both implications might
  not be in-line with our expectations or intentions, and the modal
  collapse has in fact motivated further emendations of G\"odel's
  theory and of the utilized foundational logic.}

\paragraph{Contribute: Contribute to the built-up of benchmark suites
  for domain theories}

\subsection{Layer L3 -- Applications} \label{sec:L3} Layer L3 deploys
the ethico-legal domain theories from layer L2 in practical
applications, e.g., to regulate the behaviour of an IAS.

\paragraph{Select an application scenario and define the objectives}
Already mentioned examples include the ethico-legal governance of
autonomous cars and ethico-legal reasoner in the smart home example.

\paragraph{Ethical governor component} 
A suitable explicit ethical governor architecture must be selected and
provided. This step connects with recent research area on governing
architectures for intelligent systems
\cite{GovindarajuluB15,Arkin12,Arkin09}.

 \paragraph{Populate the governor component with the ethico-legal domain theory and reasoner}
 Select the ethico-legal domain theory to be employed; if not yet
 formalized and automated, first consult layer L2.  Otherwise
 integrate the ethico-legal domain reasoner obtained from layer L2
 with the governor component and perform offline tests. Does it
 respond to example queries as expected and with reasonable
 performance? In particular, can it check for compliance of a
 considered action in an IAS wrt.~the formalized ethico-legal domain
 theory?

 \paragraph{Properly test, assess and demonstrate the system in
   practice}

 \subsection{Note on open-texture and concept explication}

 A goal of \logikey\ at layer L2 is to support, among others, the
 systematic exploration, formalization and automation of \emph{new}
 regulatory theories for IASs.  Ideally, such a development is
 conducted by an expert team comprised of logicians, legal and ethical
 experts and practitioners representing the addressed application
 domain. Alternatively, such a theory formalization process may start
 from existing legislation. A known challenge in the latter case
 concerns the open-texture of informal legal texts; e.g., relevant
 concepts may probably be specified rather vaguely and deliberately
 left open for interpretation in context.  In prior work the
 open-texture challenge occurred also in other contexts, e.g., in the
 analysis philosophical arguments. In a recent reconstruction and
 verification of Lowe's ontological argument \cite{Lowe}, for example,
 underspecified notions such as \emph{necessary being} and
 \emph{concrete being} had to be suitably interpreted in context to
 finally arrive at a verified formalisation of the argument;
 cf.~Fuenmayor and Benzm{\"u}ller~\cite{J38}, where a respective computer-supported, iterative
 interpretation process is presented and explained.  The \logikey\
 methodology does not eliminate the open-texture challenge. However,
 it provides suitable means to address it in an interaction between
 human experts and computational tools. The idea is to computationally
 explore suitable explications or emendations of vague concepts for a
 given application context. The logical plurality, flexibility and
 expressivity supported in \logikey\ thereby constitutes a distinctive
 virtue that is, as far as we are aware of, unmatched in related work.
 Moreover, ongoing work on \emph{computational
   hermeneutics}~\cite{B19} aims at reducing the need for user
 interaction in future applications by automating the exploration of
 choice points in concept explication and beyond.

 \section{Related work} \label{sec:RelatedWork}

 Relevant own related work has already been mentioned in the previous
 sections; the referenced works contain technical details and evidence
 that we cannot address in full depth in this article. Further
 references to own work and that of others are provided below. Many of
 those contain illustrating examples and further details that may well
 benefit researchers or students interested in adopting the \logikey\
 methodology in their own work. They also provide useful information
 on various intellectual roots of the research presented in this
 article.

 \subsection{Machine ethics and deontic logic}

 The questions how transparency, explainability and verifiability can
 best be achieved in future intelligent systems and whether bottom-up
 or top-down architectures should be preferred are discussed in a
 range of related articles; cf.,
 e.g.,~\cite{Malle2016,dignum18:_special_issue,Winfield2018,ijcai2017-655,Scheutz2017,DBLP:journals/irob/AndersonA15,Wallach,DBLP:journals/ras/DennisFSW16}
 and the references therein. For example, Dennis {\em et
   al.}~\cite{DBLP:journals/ras/DennisFSW16} make a compelling case
 for the use of so-called formal verification---a well-established
 technique for proving correctness of computer systems---in the area
 of machine ethics. The idea is to use formal methods to check whether
 an autonomous system (e.g., an unmanned civilian aircraft) is
 compliant with some specific ethico-legal rules (e.g., the Rules of
 the Air) when making decisions.  An ethical rule is represented as a
 formula of the form ``$\mathrm{do(a)}\Rightarrow_c\neg E\phi$",
 denoting that doing action a counts as a violation of ethical
 principle $\phi$. However, they do not specify in full the syntax and
 semantics of their operator $E$.  It may be valuable to further
 explore the relationship between this work and the approach outlined
 in the present article.  As the authors observe, on p.~6 of their
 article, the $E$ modal operator resembles the obligation operator
 $\bigcirc$ used in deontic logic.

 Further related work includes a range of implemented theorem proving
 systems.  A lean but powerful connection-based theorem prover for
 first-order modal logics, covering also SDL, has been developed by
 Otten~\cite{DBLP:conf/tableaux/Otten17}. A tableaux-based
 propositional reasoner is employed in the work of Furbach and
 Schon~\cite{DBLP:conf/tableaux/Otten17,DBLP:conf/birthday/FurbachS14}
 and first-order resolution methods for modal logics have been
 contributed by Schmidt and Hustadt
 \cite{DBLP:conf/birthday/SchmidtH13}.  Further related work includes
 a reasoner for propositional defeasible modal logic by Governatori
 and his team~\cite{DBLP:journals/ijswis/KontopoulosBGA11}. Their
 reasoner supports defeasible reasoning, but it is less flexible than
 ours, because it does not allow the user to easily switch between
 different systems of normative reasoning and explore their
 properties.  A reasoner for expressive contextual deontic reasoning
 was proposed by Bringsjord {\em et
   al.}~\cite{DBLP:conf/isaim/BringsjordGMS18}.  Pereira and
 Saptawijaya
 \cite{DBLP:series/sapere/PereiraS16,DBLP:journals/igpl/SaptawijayaP16,pereira17:_count}
 present a solution that implements deontic and counterfactual
 reasoning in Prolog.

 We are not aware of any attempts to automate, within a single
 framework, such a wide portfolio of CTD resistant propositional,
 first-order and higher-order deontic logics as we report it in this
 article.  Note that in addition to the features of the above related
 systems our solution also supports intuitive user-interaction and
 most flexible logic combinations.

 The SSE approach has also been implemented in the \leoIII\ theorem
 prover, so that the prover now provides native language support for a
 wide range of modal logics and for DDL \cite{steenportrayal,C62}. A
 recent, independent study \cite{GRUNGE2019} shows that \leoIII, due
 to its wide range of directly supported logics, has become the most
 powerful and most widely applicable ATP system existent to
 date.\footnote{The assessment has included various variants of
   classical first-order and higher-order logic benchmark
   problems. First-order and higher-order deontic logics and other
   non-classical logics were still excluded though. Their inclusion
   would clearly further benefit the \leoIII\ prover.}

 The flexibility of the SSE approach has been exploited and
 demonstrated in particular in the case study on Gewirth's PGC that we
 have presented in Sect.~\ref{sec:PGC}.  Related work on the
 mechanization of ambitious ethical theories includes the already
 mentioned automation of the \emph{``Doctrine of Double Effect"} by
 Govindarajulu and Bringsjord \cite{Govindarajulu2017}.

\subsection{Universal logical reasoning}
Related experiments, at \logikey\ layers L1 and L2, with the SSE
approach have been conducted in metaphysics \cite{C55,C40}. An initial
focus thereby has been on computer-supported assessments of rational
arguments, in particular, of modern, modal logic variants of the
{ontological argument for the existence of God}. In the course of
these experiments, in which the SSE approach was applied for
automating different variants of higher-order quantified modal logics,
the theorem prover \leoII\ even detected an previously unnoticed
inconsistency in G\"odel's \cite{GoedelNotes} modal variant of the
ontological argument, while the soundness of the emended variant by
Scott \cite{ScottNotes} was confirmed and all argument steps were
verified. Further modern variants of the ontological argument have
subsequently been studied with the approach, and theorem provers have
even contributed to the clarification of an unsettled philosophical
dispute \cite{J32}. The good performance of the SSE approach in
previous work has been a core motivation for the new application
direction addressed here. In previous work, Benzm\"uller and
colleagues also studied actualist quantifiers \cite{C37,J31}, and it
should be possible to transfer these ideas to our setting.

Another advantage of the SSE approach, when implemented within
powerful proof assistants such as \isabellehol, is that proof
construction, interactive or automated, can be supported at different
levels of abstraction. For this note that proof protocols/objects may
generally serve two different purposes: (a) they may provide an
independently verifiable explanation in a typically well-defined
logical calculus, or (b) they may provide an intuitive explanation to
the user why the problem in question has been answered positively or
negatively.  Many reasoning tools, if they are offering proof objects
at all, do generate only objects of type (a).  The SSE approach,
however, has already demonstrated its capabilities to provide both
types of responses simultaneously in even most challenging logic
settings.  For example, a quite powerful, abstract level theorem
prover for hyper-intensional higher-order modal logic has been
provided by Kirchner and colleagues \cite{J50,J47}. He encoded, using
abstraction layers, a proof calculus for this very complex logic as
proof tactics and he demonstrated how these abstract level proof
tactics can again be automated using respective tools in
\isabellehol. Kirchner then successfully applied this reasoning
infrastructure to reveal, assess and intuitively communicate a
non-trivial paradox in Zalta's \emph{``Principia Logico-Metaphysica''}~\cite{zalta16:_princ_logic_metap}.

Drawing on the results and experiences from previous work, the
ambition of our ongoing project is to further extend the already
existing portfolio of deontic logics in \isabellehol\ towards a most
powerful, flexible and scalable deontic logic reasoning
infrastructure. A core motivation thereby is to support empirical
studies in various application scenarios, and to assess and compare
the suitability, adequacy and performance of individual deontic logic
solutions for the engineering of ethically and legally regulated agents and explainable intelligent systems.  It is relevant to mention that proof automation
in \isabellehol, and also in related higher-order ATP systems such as
\leoIII~\cite{Leo-III}, is currently improving at good pace.  These
developments are fostered in particular by recently funded research
and infrastructure projects.\footnote{Prominent example projects
  include Matryoshka (\url{http://matryoshka.gforge.inria.fr}) and
  ALEXANDRIA
  (\url{http://www.cl.cam.ac.uk/~lp15/Grants/Alexandria/}).}

\subsection{Discussion}

We propose higher-order logic as a {\em uniform and highly expressive
  formal framework} to represent and experiment with normative
theories of ethico-legal reasoning. To some researchers, this may seem
paradoxical for two reasons. First of all, we do no longer aim for a
unique and standard deontic logic which can be used for all
applications, but we do propose to use higher-order logic as a unique
and formal framework to represent normative theories. So what exactly
is the difference between a unique deontic logic and a unique formal
framework? The second apparent paradox is that we propose higher-order
logic for tool support, whereas it is well-known that higher-order
logic has theoretical drawbacks and is undecidable.

These two apparent paradoxes can be explained away by our {\em
  methodology} of representing normative theories in higher-order
logic. There are many ways in which a normative theory can be
represented in higher-order logic, and only a few of them will be such
that the formal methods of the tool support can be suitably applied to
them. Therefore, the representation of deontic logics in higher-order
logic is an art, and each new representation has to come with a proof
that the embedding is faithful.  These proofs play a similar role in
our formal framework as soundness and completeness proofs play in most
of the traditional work of deontic logic.

\section{Further research} \label{sec:FurtherWork}

In future work, the range of normative theories must be extended, and
the currently represented theories must be further optimized.  In
particular, a wider range of explicit ethical theories must be studied
and formalized.  We have made historical and current developments in
normative reasoning practically accessible for the use in machine
ethics. We showed how our approach can support research in normative
reasoning itself. The use of computer-assisted exploration and
assessment of new deontic logics provides immediate feedback from
systems to property checks. This is particularly valuable for unifying
and combining different logics and for experimental studies.  For
example, since our approach supports meta-logical investigations,
conjectured relationships between I/O logics and conditional logics
can be formally assessed in future work.

Moreover, our approach can also be used for other relevant purposes.
In education, for example, the different logics discussed in this
article can now be integrated in computer-supported classroom
education.  First reassuring results have been obtained in lecture
courses at University of Luxembourg (\emph{Intelligent Agents II}) and
Freie Universit\"at Berlin (\emph{Universal Logical Reasoning} and
\emph{Computational Metaphysics}).  Students start exploring existing
deontic logics and other non-classical logics without the need for a
deep a priori understanding of them. They can use the logics in small
or larger examples, modify them, assess their modifications, etc.

In agent simulation, we plan the embodiment/implementation of explicit
ethical theories in simulated agents. We can then investigate
properties for single agents, but beyond that also study agent
interaction and the behaviour of the agent society as a whole.  To
this end the agent-based \textsc{LeoPARD} framework \cite{LeoPARD},
which is underlying the \leoIII\ prover, will be adapted and utilized
in future work.

\section{Conclusion} \label{sec:Conclusion}

It is the availability of powerful systems such as \isabellehol\ or
\leoIII\ for {\em tool support} that allows our approach to
revolutionize the field of formal ethics.  Though the use of
higher-order logic may come as a paradigm shift to the field of
ethical reasoning, it is an insight which is already well established
in the area of formal deduction.  Whereas it is far from
straightforward to represent deontic logics in higher-order logic,
once a deontic logic has been represented, it becomes much easier to
make small changes to them and see the effect of these changes---and
this is exactly how our approach supports the design of normative
theories of ethico-legal reasoning. It is in the ease in which the
user can work with and adapt existing theories, how the design of
normative theories is made accessible to non-specialist users and
developers.
 
To validate our approach we have embedded the main strands of current
deontic logic within higher-order logic, and we have experimented with
the approach over the past two years.

Our \logikey\ normative reasoning framework and infrastructure
supports empirical studies on ethico-legal theories in which the
underlying logic formalisms itself can be flexibly varied, assessed
and compared in context. This infrastructure can fruitfully support
the development of much needed logic based approaches towards ethical
agency. The solution we have presented supports a wide range of
specific deontic logic variants, and it also scales for their
first-order and higher-order extensions.

The use of tool support for ethico-legal reasoning is not only
fruitful to develop new normative theories, it is now being employed
also in teaching, and we plan to use it as a formal framework for
simulation. Another promising application is the use of our approach
for the study of deontic modality in natural language processing. In
linguistics the use of higher-order logic is already adopted for the
semantics of natural language, and we believe that our framework can
also support studies of the pragmatic aspects of the use of deontic
modality.

\paragraph{Acknowledgments:} We thank the anonymous reviewers for
their valuable feedback and for various comments, which helped to
significantly improve this paper. We also thank Ali Farjami, David
Fuenmayor, Tobias Glei{\ss}\-ner, Alexander Steen, Valeria Zahoransky and several further colleagues for their contributions to this project.

\section*{References}

\end{document}